\documentclass{article}

\usepackage{times}
\usepackage{graphicx} 
\usepackage{subfigure} 
\usepackage{natbib}
\usepackage{algorithm}
\usepackage{algorithmic}
\usepackage{hyperref}

\usepackage[accepted]{icml2016}
\usepackage{amssymb, amsmath, mathrsfs, amsthm, stmaryrd, upgreek, mathtools}
\usepackage{pgfplotstable}
\usepackage{booktabs}
\usetikzlibrary{calc,shadows}
\usepgfplotslibrary{groupplots}
\usepackage{amsmath}
\usepackage{amsfonts}
\usepackage{amssymb,pgfplots}
\usepackage{amsbsy}
\usepackage{graphicx}
\usepackage{default}
\usepackage{color}
\usepackage{enumerate}
\usepackage{tikz}
\usepackage{amsthm}
\usepackage{dsfont}
\usepackage{url}
\usepackage{multirow}

\newcommand{\C}{\mathbb{C}}

\newcommand{\Z}{\mathbb{Z}}
\newcommand{\R}{\mathbb{R}}
\newcommand{\Real}{\text{Re}}
\newcommand{\Imag}{\text{Im}}
\newcommand{\la}{\lambda}
\newtheorem{theorem}{Theorem}

\newtheorem{lemma}{Lemma}
\newtheorem{definition}{Definition}
\newtheorem{proposition}{Proposition}

\newtheorem{remark}{Remark}

\icmltitlerunning{Discrete Deep Feature Extraction: A Theory and New Architectures}

\begin{document} 

\twocolumn[
\icmltitle{Discrete Deep Feature Extraction: A Theory and New Architectures}

\icmlauthor{Thomas Wiatowski$^1$}{withomas@nari.ee.ethz.ch}
\icmlauthor{Michael Tschannen$^1$}{michaelt@nari.ee.ethz.ch}
\icmlauthor{Aleksandar Stani\'c$^1$}{astanic@student.ethz.ch}
\icmlauthor{Philipp Grohs$^2$}{philipp.grohs@univie.ac.at}
\icmlauthor{Helmut B\"olcskei$^1$}{boelcskei@nari.ee.ethz.ch}
\icmladdress{$^1$Dept. IT \& EE, ETH Zurich, Switzerland}
            \vspace{-0.3cm}
\icmladdress{$^2$Dept. Math., University of Vienna, Austria}

\icmlkeywords{Deep convolutional neural networks, scattering networks, frame theory, feature extraction, signal classification.}

\vskip 0.3in
]
\begin{abstract}
First steps towards a mathematical theory of deep convolutional neural networks for feature extraction were made---for the continuous-time case---in Mallat, 2012, and Wiatowski and B\"olcskei, 2015. This paper considers the discrete case, introduces  new convolutional neural network architectures, and proposes a mathe\-matical framework for their analysis. Specifi\-cally, we establish deformation and translation sensitivity results of local and global nature, and we investigate how certain structural properties of the input signal are reflected in the corresponding feature vectors. Our theory applies to general filters and general Lipschitz-continuous non-linearities and pooling operators. Experiments on handwritten digit classification and facial landmark detection---including feature importance evaluation---complement the theoretical findings. 
\end{abstract}

\section{Introduction}
Deep convolutional neural networks (DCNNs) have proven tremendously successful in a wide range of machine lear\-ning tasks \cite{Bengio,Nature}. Such networks are composed of multiple layers, each of which computes convolutional transforms followed by the application of non-linearities and pooling operators. 

DCNNs are typically distinguished according to (i) whether the filters employed are learned (in a supervised \cite{LeCunProc,Huang,Jarrett} or unsupervised \cite{Poultney,Ranzato,Jarrett} fashion) or pre-specified (and structured, such as, e.g., wavelets \cite{Serre,GaborLowe,MallatS}, or unstructured, such as random filters \cite{Ranzato,Jarrett}), (ii) the non-linearities  used (e.g., logistic sigmoid, hyperbolic tangent, modulus, or rectified linear unit), and (iii) the pooling operator employed (e.g., sub-sampling, average pooling, or max-pooling). While a given choice of filters, non-linearities, and pooling operators will lead to vastly diffe\-rent performance results across datasets, it is remarkable that the overall DCNN architecture allows for impressive classification results across an extraordinarily broad range of applications. It is therefore of significant interest to understand the mechanisms underlying this universality. 

First steps towards addressing this question and develo\-ping a mathematical theory of DCNNs for feature extraction were made---for the continuous-time case---in \cite{MallatS,wiatowski2015mathematical}. Specifically, \cite{MallatS} analyzed so-called scattering networks, where  signals are propagated through layers that employ directional wavelet filters and modulus non-linearities but no intra-layer pooling. The resulting wavelet-modulus feature extractor is horizontally (i.e., in every network layer) translation-invariant (accomplished by letting the wavelet scale parameter go to infinity) and deformation-stable, both properties of significance in practical feature extraction applications. Recently, \cite{wiatowski2015mathematical} considered Mallat-type networks with arbitrary filters (that may be learned or pre-specified), general Lipschitz-continuous non-linearities (e.g., rectified linear unit, shifted logistic sigmoid, hyperbolic tangent, and the modulus function), and a continuous-time pooling operator that amounts to a dilation. The essence of the results in \cite{wiatowski2015mathematical} is that vertical (i.e., asymptotically in the network depth) translation invariance and Lipschitz continuity of the feature extractor are induced by the network structure per se rather than the specific choice of filters and non-linearities. For band-limited signals \cite{wiatowski2015mathematical}, cartoon functions \cite{grohs_wiatowski}, and Lipschitz-continuous functions \cite{grohs_wiatowski}, Lip\-schitz continuity of the feature extractor automatically leads to bounds on deformation sensitivity. 

A discrete-time setup for wavelet-modulus scatte\-ring networks (referred to as ScatNets) was considered in \cite{Bruna}. 

\paragraph{Contributions.} The purpose of the present paper is to develop a theory of discrete DCNNs for feature extraction. Specifically, we follow the philosophy put forward in \cite{wiatowski2015mathematical,grohs_wiatowski}. Our theory incorporates gene\-ral filters, Lipschitz non-linearities, and Lipschitz pooling operators. In addition, we introduce and analyze a wide variety of new network architectures which build the feature vector from subsets of the layers. This leads us to the notions of local and global feature vector properties with   globa\-lity pertaining to characteristics brought out by the union of features across all network layers, and locality identifying attributes made explicit in individual layers.

Besides providing analytical performance results of gene\-ral validity, we also investigate how certain structural pro\-perties of the input signal are reflected in the corres\-ponding feature vectors. Specifically, we analyze the (local and global) deformation and translation sensitivity properties of feature vectors corresponding to sampled cartoon functions \cite{Cartoon}. For simpli\-city of exposition, we focus on the $1$-D case throughout the paper, noting that the extension to the higher-dimensional case does not pose any significant difficulties. 

Our theoretical results are complemented by extensive numerical studies on facial landmark detection and handwritten digit classification. Specifically, we elucidate the role of local feature vector properties through a feature relevance study. 

\paragraph{Notation.}
The complex conjugate of $z \in \mathbb{C}$ is denoted by $\overline{z}$. We write $\Real(z)$ for the real, and $\Imag(z)$ for the ima\-ginary part of $z \in \mathbb{C}$. We let $H_N:=\{ f:\mathbb{Z} \to \C \ | \ f[n]=f[n+N], \ \forall\, n \in \mathbb{Z}\}$ be the set of $N$-periodic discrete-time signals\footnote{We note that $H_N$ is isometrically isomorphic to $\C^N$, but we prefer to work with $H_N$ for the sake of expositional simplicity.}, and set $I_N:=\{ 0,1,\dots, N-1\}$. The delta function $\delta \in H_N$ is $\delta[n]:=1$, for $n=kN$, $k\in \mathbb{Z}$, and $\delta[n]:=0$, else. For $f,g \in H_N$, we set $\langle f,g \rangle :=\sum_{k \in I_N} f[k]\overline{g[k]}$, $\| f\|_1:=\sum_{n\in I_N}|f[n]|$, $\| f \|_2:=(\sum_{n\in I_N}|f[n]|^2)^{1/2}$, and $\| f\|_\infty:=\sup_{n\in I_N}|f[n]|$. We denote the discrete Fourier transform (DFT) of $f \in H_N$ by $\hat{f}[k]:=\sum_{n\in I_N}f[n]e^{-2\pi i kn/N }$. The circular convolution of $f\in H_N$ and $g\in H_N$ is $(f\ast g)[n]:=\sum_{k\in I_N}f[k]g[n-k]$. We write $(T_mf)[n]:=f[n-m]$, $m \in \Z$, for the cyclic translation operator. The supremum norm of a continuous-time function $c:\R \to \C$ is $\| c \|_\infty:=\sup_{x\in \R }|c(x)|$. The indicator function of an interval $[a,b]\subseteq \R$ is defined as $\mathds{1}_{[a,b]}(x):=1$, for $x\in [a,b]$, and $\mathds{1}_{[a,b]}(x):=0$, for $x\in \R\backslash{[a,b]}$. The cardinality of the set $\mathcal{A}$ is denoted by $\text{card}(\mathcal{A})$.

\section{The basic building block}\label{sec:frametheory}
The basic building block of a DCNN, described in this section, consists of a convolutional transform followed by a non-linearity and a pooling operation. 

\vspace{-0.2cm}
\subsection{Convolutional transform}\label{sec:filters}
A convolutional transform is made up of a set of filters $\Psi_\Lambda=\{ g_\la \}_{\la \in \Lambda}$. The  finite index set $ \Lambda$ can be thought of as labeling a collection of scales, directions, or frequency-shifts. The filters $g_\lambda$---referred to as atoms---may be learned (in a supervised or unsupervised fashion), pre-specified and unstructured such as random filters, or pre-specified and structured such as wavelets, curvelets, shearlets, or Weyl-Heisenberg functions.

\begin{definition}\label{defn:contframe}
Let $\Lambda$ be a finite index set. The collection $\Psi_\Lambda=\{ g_\la \}_{\la \in \Lambda}\subseteq H_N$ is called a convolutional set with Bessel bound $B\geq 0$ if 
\begin{equation}\label{condii}
\sum_{\lambda\in \Lambda} \| f \ast g_{\lambda} \|_2^2 \leq B \| f \|_2^2, \hspace{0.5cm}\forall f \in H_N.
\end{equation}
\end{definition}

\vspace{-0.3cm}
Condition \eqref{condii} is equivalent to
\begin{equation}\label{freqcover}
 \sum_{\lambda \in \Lambda} |\widehat{{g_{\lambda}}}[k]|^2\leq B, \hspace{0.5cm} \forall k \in I_N,\vspace{-0.2cm}
\end{equation}
and hence, every finite set $\{ g_\la \}_{\la \in \Lambda}$ is a convolutional set with Bessel bound  
 $B^\ast:=\max_{k\in I_N} \sum_{\lambda \in \Lambda} |\widehat{{g_{\lambda}}}[k]|^2$. As $(f\ast g_\lambda)[n]=\big\langle f,\overline{g_\lambda [n-\cdot]}\big\rangle$, $n\in I_N$, $\lambda \in \Lambda$, the outputs of the filters $g_\lambda$ may be interpreted as inner products of the input signal $f$ with translates of the atoms $g_\lambda$. Frame theory \cite{Daubechies} therefore tells us that the existence of a lower bound $A>0$ in \eqref{freqcover} according to 
 \begin{equation}\label{eq:frame1}
 A\leq \sum_{\lambda \in \Lambda} |\widehat{{g_{\lambda}}}[k]|^2\leq B, \hspace{0.5cm} \forall k \in I_N, \vspace{-0.2cm} 
 \end{equation}implies that every element  in $H_N$ can be written as a linear combination of elements in the set $\big\{\overline{g_\lambda[n-\cdot]}\big\}_{n\in I_N, \lambda\in \Lambda}$ (or in more technical parlance, the set $\big\{\overline{g_\lambda[n-\cdot]}\big\}_{n\in I_N, \lambda\in \Lambda}$ is complete for $H_N$). The absence of a lower bound $A> 0$ may therefore result in $\Psi_\Lambda$ failing to extract  essential features of the signal $f$. We note, however, that even learned filters are like\-ly to sa\-tisfy \eqref{eq:frame1} as all that is needed is, for each $k\in I_N$, to have $\widehat{{g_{\lambda}}}[k]\neq 0$ for at least one $\lambda \in \Lambda$.  As we shall see below, the existence of a lower bound $A>0$ in \eqref{eq:frame1} is, however, not needed for our theory to apply. 

Examples of structured convolutional sets with $A=B=1$ include, in the $1$-D case, wavelets \cite{Daubechies} and Weyl-Heisenberg functions \cite{bolcskei1997discrete}, and in the $2$-D case, tensorized wavelets \cite{MallatW}, curvelets \cite{candes2006fast}, and shearlets \cite{Shearlets}.

\subsection{Non-linearities}\label{sec:exmpnon}
The non-linearities $\rho:\mathbb{C}\to \mathbb{C}$ we consider are all point-wise and satisfy the Lipschitz property $|\rho(x) - \rho(y)| \leq L | x-y|$, $\forall x,y\in \C$, for some $L>0$. 

\subsubsection{Example non-linearities} \label{sec:exmpnonlist}
\begin{itemize}
\item{The \textit{hyperbolic tangent} non-linearity, defined as $\rho(x)=\tanh(\Real(x))+i\tanh(\Imag(x))$, where $\tanh(x)=\frac{e^{x}-e^{-x}}{e^{x}+e^{-x}}$, has Lipschitz constant $L=2$.}
\item{The \textit{rectified linear unit} non-linearity is given by $\rho(x)=\max\{0,\Real(x)\}+i\max\{0,\Imag(x)\}$, and has Lipschitz constant $L=2$.}
\item{The \textit{modulus} non-linearity is $\rho(x)=|x|$, and has Lip\-schitz constant $L=1$.}
\item{The \text{logistic sigmoid} non-linearity is defined as $\rho(x)=\text{sig}(\Real(x))+i\,\text{sig}(\Imag(x))$, where $\text{sig}(x)=\frac{1}{1+e^{-x}}$, and has Lipschitz constant $L=1/2$.}
\end{itemize}
We refer the reader to \cite{wiatowski2015mathematical} for proofs of the Lipschitz properties of these example non-linearities. 

\subsection{Pooling operators}
The essence of pooling is to reduce signal dimensionality in the individual network layers and to ensure robustness of the feature vector w.r.t. deformations and translations. 

The theory developed in this paper applies to general pooling operators $P:H_{N}\to H_{N/S}$, where $N,S \in \mathbb{N}$ with $N/S \in \mathbb{N}$, that satis\-fy the Lipschitz property $\| Pf - Pg\|_2 \leq R \|f-g \|$, $\forall f,g \in H_N$, for some $R>0$. The integer $S$ will be referred to as pooling factor, and determines the ``size'' of the neighborhood values are combined in. 

\subsubsection{Example pooling operators}\label{sec:exmppooling}
\begin{itemize}
\item{\textit{Sub-sampling}, defined as  $P:H_{N}\to H_{N/S}$, $(Pf)[n]=f[Sn],$ $n\in I_{N/S}$, has Lipschitz constant $R=1$. For $S=1$, $P$ is the  identity operator which amounts to ``no pooling''. }
\item{\textit{Averaging}, defined as $P:H_{N}\to H_{N/S}$, $(Pf)[n]=\sum_{k=Sn}^{Sn+S-1}\alpha_{k-Sn}f[k],$ $n\in I_{N/S}$, has Lip\-schitz constant $R=S^{1/2}\max_{k\in \{0,\dots, S-1 \}} |\alpha_k|$. The weights $\{\alpha_k\}_{k=0}^{S-1}$ can be learned \cite{LeCunProc} or pre-specified \cite{Pinto} (e.g., uniform pooling corresponds to $\alpha_k=\frac{1}{S}$, for $k\in \{0,\dots,S-1 \}$).}
\item{\textit{Maximization}, defined as $P:H_{N}\to H_{N/S}$, $(Pf)[n]=\max_{k\in \{Sn,\dots,Sn+S-1 \}}|f[k]|,$ $n\in I_{N/S}$, has Lipschitz constant $R=1$. }
\end{itemize}
We refer to Appendix \ref{app:proofLip} in the Supplement for proofs of the Lipschitz property of these three example pooling operators along with the derivations of the corresponding Lipschitz constants. 

\section{The network architecture}
The architecture we consider is flexible in the following sense. In each layer, we can feed into the feature vector either the signals propagated down to that layer (i.e., the feature maps), filtered versions thereof, or we can decide not to have that layer contribute to the feature vector. 

The basic building blocks of our network are the triplets $(\Psi_d,\rho_d,P_d)$ of filters, non-linearities, and pooling operators associated with the $d$-th network layer and referred to as \textit{mo\-dules}. We emphasize that these triplets are allowed to be different across layers.

\begin{definition} For network layers $d$, $1\leq d \leq D$, let $\Psi_d=\{g_{\la_d} \}_{\la_d \in \Lambda_d}\subseteq H_{N_d}$  be a  convolutional set, $\rho_d:\C \to \C$ a  point-wise Lipschitz-continuous non-linearity, and $P_d:H_{N_d} \to H_{N_{d+1}}$  a Lipschitz-continuous pooling operator with $N_{d+1}=\frac{N_d}{S_d}$, where $S_d\in \mathbb{N}$ denotes the pooling factor in the $d$-th layer. Then, the sequence of triplets
\vspace{-0.1cm}
$$
\Omega:=\Big( (\Psi_d,\rho_d,P_d)\Big)_{1\leq d \leq D}\vspace{-0.1cm}
$$
is called a module-sequence. 
\end{definition}
Note that the dimensions of the spaces $H_{N_d}$ satisfy $N_1 \geq N_2 \geq \ldots \geq N_D$. Associated with the module $(\Psi_d,\rho_d,P_d)$, we define the operator 
\begin{equation}\label{eq:e1} 
(U_d[\lambda_d]f):=P_d(\rho_d(f \ast g_{\lambda_d}))
\end{equation}
and extend it to paths on index sets $$q=(\lambda_1,\lambda_2,\dots,\lambda_d)\in \Lambda_1\times \Lambda_2\times \dots \times \Lambda_d:=\Lambda_{1}^d,$$ for $1\leq d \leq D$, according to
\begin{equation}\label{aaaaa}
\begin{split}
U[q]f=&\,U[(\lambda_1,\lambda_{2},\dots,\lambda_d)]f\\:=&\, U_d[\lambda_d] \cdots U_{2}[\lambda_{2}]U_{1}[\lambda_{1}]f.
\end{split}
\end{equation}
For the empty path $e:=\emptyset$ we set $\Lambda_1^0:=\{ e \}$ and let $U[e]f:=f$, for all $f\in H_{N_1}$. 
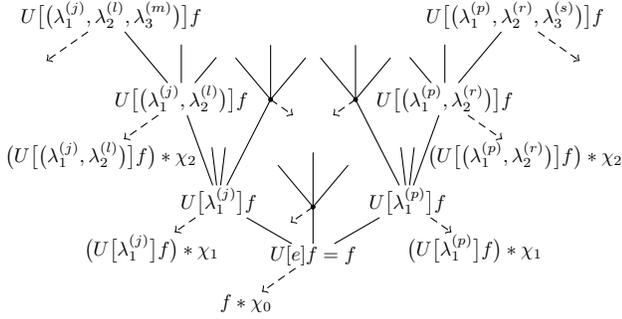
\begin{figure}[t!]
\centering
\begin{tikzpicture}[scale=1.45,level distance=10mm]

  \tikzstyle{every node}=[rectangle, inner sep=1pt, scale=0.75]
  \tikzstyle{level 1}=[sibling distance=30mm]
  \tikzstyle{level 2}=[sibling distance=10mm]
  \tikzstyle{level 3}=[sibling distance=4mm]
  \node {$U[e]f=f$}
	child[grow=90, level distance=.45cm] {[fill=gray!50!black] circle (0.5pt)
		child[grow=130,level distance=0.5cm] 
        		child[grow=90,level distance=0.5cm] 
        		child[grow=50,level distance=0.5cm]
		child[level distance=.25cm,grow=215, densely dashed, ->] {}  
	}
        child[grow=150] {node {$U\big[\lambda_1^{(j)}\big]f$}
	child[level distance=.75cm,grow=215, densely dashed, ->] {node {$\big(U\big[\lambda_1^{(j)}\big]f\big)\ast\chi_{1}$}
	}
	child[grow=83, level distance=0.5cm] 
	child[grow=97, level distance=0.5cm] 
        child[grow=110] {node {$U\big[\big(\lambda_1^{(j)},\lambda_2^{(l)}\big)\big]f$}
	child[level distance=.9cm,grow=215, densely dashed, ->] {node {$\big(U\big[\big(\lambda_1^{(j)},\lambda_2^{(l)}\big)\big]f\big)\ast\chi_{2}$}
	}
        child[grow=130] {node {$U\big[\big(\lambda_1^{(j)},\lambda_2^{(l)},\lambda_3^{(m)}\big)\big]f$}
	child[level distance=0.75cm,grow=215, densely dashed, ->] {node {}}
	}
        child[grow=90,level distance=0.5cm]
 	child[grow=50,level distance=0.5cm]
       }
       child[grow=63, level distance=1.05cm] {[fill=gray!50!black] circle (0.5pt)
	child[grow=130,level distance=0.5cm] 
       child[grow=90,level distance=0.5cm] 
       child[grow=50,level distance=0.5cm]   
       child[level distance=.25cm,grow=325, densely dashed, ->] {}    
       }
       }
       child[grow=30] {node {$U\big[\lambda_1^{(p)}\big]f$}
       child[level distance=0.75cm, grow=325, densely dashed, ->] {node {$\big(U\big[\lambda_1^{(p)}\big]f\big)\ast\chi_{1}$}
	}
	child[grow=83, level distance=0.5cm] 
	child[grow=97, level distance=0.5cm]
        child[grow=117, level distance=1.05cm] {[fill=gray!50!black] circle (0.5pt)
        child[grow=130,level distance=0.5cm] 
        child[grow=90,level distance=0.5cm]
        child[grow=50,level distance=0.5cm] 
        child[level distance=.25cm,grow=215, densely dashed, ->] {}  
	 }
        child[grow=70] {node {$U\big[\big(\lambda_1^{(p)},\lambda_2^{(r)}\big)\big]f$}
	 child[level distance=0.9cm,grow=325, densely dashed, ->] {node {$\big(U\big[\big(\lambda_1^{(p)},\lambda_2^{(r)}\big)\big]f\big)\ast\chi_{2}$}}
	child[grow=130,level distance=0.5cm] 
         child[grow=90,level distance=0.5cm] 
                      child[grow=50] {node {$U\big[\big(\lambda_1^{(p)},\lambda_2^{(r)},\lambda_3^{(s)}\big)\big]f$}
             child[level distance=0.75cm,grow=325, densely dashed, ->] {node {}}}
	}
     }
	child[level distance=0.75cm, grow=215, densely dashed, ->] {node {$f\ast \chi_0$}};
\end{tikzpicture}
\caption{Network architecture  underlying the feature extractor  \eqref{ST}. The index $\lambda_{d}^{(k)}$ corresponds to the $k$-th atom $g_{\lambda_{d}^{(k)}}$ of the convolutional set $\Psi_d$ associated with the $d$-th network layer. The function $\chi_{d}$ is the output-generating atom of the $d$-th layer. The root of the network corresponds to $d=0$.} 
\label{fig:gsn}
\end{figure}

The network output in the $d$-th layer is given by $(U[q]f)\ast \chi_d$, $q\in \Lambda_1^d$, where $\chi_d \in H_{N_{d+1}}$ is referred to as output-generating atom. Specifically, we let $\chi_d$ be (i) the delta function $\delta[n]$, $n \in I_{N_{d+1}}$, if we want the output to equal the unfiltered features $U[q]f$, $q\in \Lambda_1^d$, propagated down to layer $d$, or (ii) any other signal of length $N_{d+1}$, or (iii) $\chi_d=0$ if we do not want layer $d$ to contribute to the feature vector. From now on we formally add $\chi_d$ to the set $\Psi_{d+1}= \{ g_{\lambda_{d+1}} \}_{\lambda_{d+1}\in \Lambda_{d+1}}$, noting that $\{ g_{\lambda_{d+1}}\}_{\lambda_{d+1} \in \Lambda_{d+1}}\cup \{ \chi_{d}\}$ forms a convolutional set $\Psi'_{d+1}$ with Bessel bound $B'_{d+1} \leq B_{d+1} + \max_{k\in I_{N_{d+1}}} | \widehat \chi_d[k]|^2$. We emphasize that the atoms of the augmented set $\{ g_{\lambda_{d+1}}\}_{\lambda_{d+1} \in \Lambda_{d+1}}\cup \{ \chi_{d}\}$ are employed across two consecutive layers in the sense of $\chi_{d}$ genera\-ting the output in the $d$-th layer accor\-ding to $(U[q]f)\ast \chi_d$, $q\in \Lambda_{1}^{d}$, and the remaining atoms $\{ g_{\lambda_{d+1}}\}_{\lambda_{d+1} \in \Lambda_{d+1}}$  propa\-gating the signals $U[q]f$, $q\in \Lambda_{1}^{d}$, from the $d$-th layer down to the $(d+1)$-st layer accor\-ding to \eqref{eq:e1}, see Fig. \ref{fig:gsn}. With slight abuse of notation, we shall henceforth write $\Psi_d$ for $\Psi'_d$ and $B_d$ for $B'_d$ as well.

We are now ready to define the feature extractor $\Phi_\Omega$ based on the module-sequence $\Omega$. 
\begin{definition}\label{defn2}
 Let $\Omega=\big( (\Psi_d,\rho_d,P_d)\big)_{1\leq d \leq D}$ be a module-sequence. The feature extractor $\Phi_\Omega$ based on $\Omega$ maps $f\in H_{N_1}$ to its features 
\vspace{-0.3cm}
\begin{equation}\label{ST}
\Phi_\Omega(f):=\bigcup_{d=0}^{D-1} \Phi^d_\Omega(f),\vspace{-0.2cm}
\end{equation}
where $\Phi^d_\Omega(f):=\{ (U[q]f)\ast \chi_d \}_{q\in \Lambda_1^d}$ is the collection of features generated in the $d$-th network layer (see Fig. \ref{fig:gsn}).
\end{definition}
The dimension of the feature vector $\Phi_\Omega(f)$ is given by $\varepsilon_0N_{1}+\sum_{d=1}^{D-1}\varepsilon_dN_{d+1}\big(\prod_{k=1}^d\text{card}(\Lambda_k)\big)$, where $\varepsilon_d=1$, if an output is generated (either filtered or unfiltered) in the $d$-th network layer, and $\varepsilon_d=0$, else. As $N_{d+1}=\frac{N_d}{S_d}=\dots=\frac{N_1}{S_1\cdots S_d}$, for $d\geq 1$, the dimension of the overall feature vector is determined by the pooling factors $S_k$ and, of course, the layers that contribute to the feature vector.
\begin{remark}
It was argued in \cite{Bruna,Anden,Oyallon} that the features $\Phi^1_\Omega(f)$ when generated by wavelet filters, modu\-lus non-linearities, without intra-layer pooling, and by employing output-generating atoms with low-pass characte\-ristics, describe mel frequency cepstral coefficients \cite{Davis} in $1$-D, and SIFT-descriptors \cite{Lowe,Tola} in $2$-D. 
\end{remark}

\section{Sampled cartoon functions}\label{sec-cartoon}
While our main results hold for general signals $f$, we can provide a refined analysis for the class of sampled cartoon functions. This allows to understand how certain structural properties of the input signal, such as the pre\-sence of sharp edges, are reflected in the feature vector. Cartoon functions---as introduced in continuous time in \cite{Cartoon}---are piecewise ``smooth''  apart from curved discontinuities along Lipschitz-continuous hypersurfaces. They hence provide a good model for natural images (see Fig. \ref{fig:data}, left) such as those in the Caltech-256 \cite{Caltech256} and the CIFAR-100 \cite{CIFAR2} datasets, for ima\-ges of handwritten digits \cite{MNIST} (see Fig. 2, middle), and for images of geometric objects of different shapes, sizes, and colors as in the Baby AI School dataset\footnote{\url{http://www.iro.umontreal.ca/\%7Elisa/twiki/bin/view.cgi/Public/BabyAISchool}}.

Bounds on deformation sensitivity for cartoon functions in continuous-time DCNNs were recently reported in \cite{grohs_wiatowski}. Here, we analyze deformation sensitivity for  sampled cartoon functions passed through discrete DCNNs. 
\begin{definition}\label{def:1d-cartoon}
The function $c:\R \to \C$ is referred to as a cartoon function if it can be written as $c = c_1 + \mathds{1}_{[a,b]}c_2$, where $[a,b]\subseteq [0,1]$ is a closed interval, and $c_i:\mathbb{R}\to \mathbb{C}$, $i=1,2$, satisfies the Lipschitz property
\begin{align*}\label{eq:decay}
|c_i(x)-c_i(y)|\leq C|x-y|, \quad \forall x,y\in \mathbb{R},
\end{align*}
for some $C>0$. Furthermore, we denote by 
\begin{align*}
\mathcal{C}^{K}_{\mathrm{CART}}:=\{&c_1 + \mathds{1}_{[a,b]}c_2 \ | \ |c_i(x)-c_i(y)|\leq K|x-y|, \\
& \forall\, x, y\in \mathbb{R},\  i=1,2, \ \|c_2\|_\infty \leq K\}
\end{align*} 
the class of cartoon functions of variation $K>0$, and by
\begin{align*}
\mathcal{C}^{N,K}_{\mathrm{CART}}:=&\Big\{f[n]=c(n/N), \ n\in \{0,1,\dots, N-1\} \ \Big| \\
&c=(c_1 + \mathds{1}_{[a,b]}c_2) \in \mathcal{C}^{K}_{\mathrm{CART}} \ \text{with} \\
 & \ a,b \notin \Big\{0,\frac{1}{N}, \dots, \frac{N-1}{N}\Big\} \Big\}
\end{align*}
the class of sampled cartoon functions of length $N$ and variation $K>0$.
\end{definition}

\newcommand{\imgsize}{.14\textwidth}
\newcommand{\pltsize}{.23\textwidth}
\newcommand{\liney}{0.0895\textwidth}
\newcommand{\linemargin}{0.01\textwidth}
\begin{figure}
\centering
	\includegraphics[width = .15\textwidth]{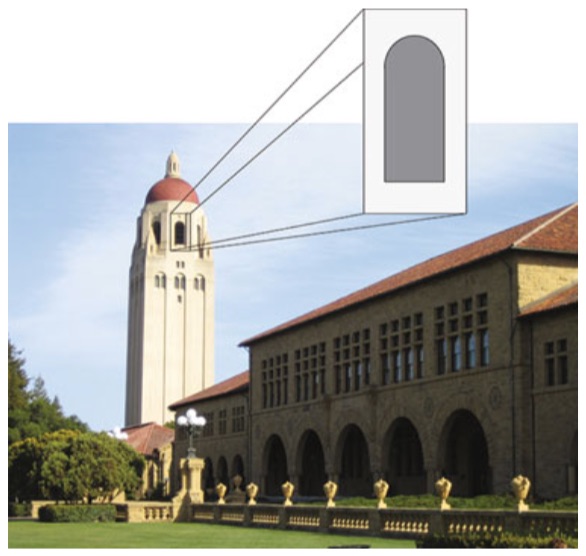}	
	\begin{tikzpicture}[scale=1]
  	\node[inner sep=0pt] (img) at (\imgsize,0)
    	{\includegraphics[width=\imgsize]{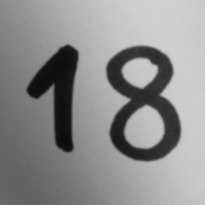}};
   	\draw[-,dashed] ($(img.north west) + (-\linemargin,-\liney)$) -- ($(img.north east) + (\linemargin,-\liney)$);
	\end{tikzpicture}
	\begin{tikzpicture}[scale=1]
	\begin{axis}[width = \pltsize,axis lines=center, ticks=none]
    		\addplot +[mark=none,solid,black] table[x index=0,y index=1]{CartoonCorsssection.dat};
    	\end{axis}
	\end{tikzpicture}
	
	\caption{Left: A natural image (image credit: \cite{ShearletsIntro}) is typically governed by  are\-as of little variation, with the individual areas separated by edges that can be modeled as  curved singularities. Middle: Image of a handwritten digit. Right: Pixel values corresponding to the dashed row in the middle image.}
	\label{fig:data}
	\vspace{-0.1cm}
\end{figure}

We note that excluding the boundary points $a,b$ of the interval $[a,b]$ from being sampling points $n/N$ in the definition of $\mathcal{C}^{N,K}_{\mathrm{CART}}$ is of conceptual importance (see Remark~\ref{rem:def} in the Supplement). Moreover, our results can easily be generalized to classes  $\mathcal{C}^{N,K}_{\mathrm{CART}}$ consis\-ting of functions $f[n]=c(n/N)$ with $c$ containing multiple  ``1-D edges'' (i.e., multiple discontinuity points) according to $c=c_1+\sum_{l=1}^{L}\mathds{1}_{[a_l,b_l]}c_2$ with $\cap_{l=1}^L[a_l,b_l]=\emptyset$. We also note that $\mathcal{C}^{N,K}_{\mathrm{CART}}$ reduces to the class of sampled Lipschitz-continuous functions upon setting $c_2=0$.

 A sampled cartoon function in $2$-D models, e.g., an image\- acquired by a digital camera (see Fig. \ref{fig:data}, middle); in $1$-D, $f \in \mathcal{C}^{N,K}_{\mathrm{CART}}$ can be thought of as the pixels in a row or column of this image (see Fig. \ref{fig:data} right, which shows a cartoon function with $6$ discontinuity points).

\section{Analytical results}\label{sec:PER} 
We analyze global and local feature vector properties with globality pertaining to characteristics brought out by the union of features across all network layers, and locality identifying attributes made explicit in individual layers.

\subsection{Global properties}\label{sec:defstab}
\begin{theorem}\label{mainmain}
 Let $\Omega=\big( (\Psi_d, \rho_d, P_d)\big)_{1\leq d \leq D}$ be a module-sequence. Assume that the Bessel bounds $B_d>0$, the Lipschitz constants $L_d>0$ of the non-linearities $\rho_d$, and the Lipschitz constants $R_d>0$ of the pooling operators $P_d$ satisfy 
 \vspace{-0.2cm}
\begin{equation}\label{weak_admiss2}
 \max_{1\leq d \leq D}\max\{B_d,B_dR_d^2L_d^2 \}\leq 1. \vspace{-0.2cm}
 \end{equation} 
\begin{itemize}
\item[i)]{The feature extractor $\Phi_\Omega$  is Lipschitz-continuous with Lipschitz constant $L_\Omega=1$, i.e.,   \vspace{-0.1cm}
\begin{equation}\label{eq:thm1}
||| \Phi_\Omega(f) -\Phi_\Omega(h)||| \leq \| f-h \|_2,  \vspace{-0.05cm}
\end{equation}
for  all $f,h \in H_{N_1}$, where the feature space norm is defined as  \vspace{-0.2cm}
\begin{equation}\label{eq:featspacenorm}
||| \Phi_\Omega(f)|||^2:=\sum_{d=0}^{D-1}\sum_{q\in \Lambda_{1}^d} ||(U[q]f)\ast\chi_d ||^2_2. \vspace{-0.2cm}
\end{equation}
}
\item[ii)]{If, in addition to \eqref{weak_admiss2}, for all $d\in \{1,\dots, D-1\}$ the non-linearities $\rho_d$ and the pooling operators $P_d$ sa\-tisfy $\rho_d(0)=0$ and $P_d(0)=0$ (as all non-linearities and pooling operators in Sections \ref{sec:exmpnonlist} and \ref{sec:exmppooling}, apart from the logistic sigmoid non-linearity, do), then
\vspace{-0.25cm}
\begin{equation}\label{eq:thm2}
||| \Phi_\Omega(f)||| \leq \| f \|_2, \hspace{0.5cm} \forall f \in H_{N_1}.
\end{equation}
\vspace{-.75cm}}
\item[iii)]{For every variation $K>0$ and deformation $F_\tau$ of the form
\vspace{-0.15cm}
\begin{align}\label{eq:def0}
\hspace{-0.4cm}(F_\tau f)[n]:&=c(n/N_1 -\tau (n/N_1)),\quad n\in I_{N_1},
\end{align}
where $\tau:\mathbb{R}\to [-1,1]$, the deformation sensitivity is bounded according to 
\vspace{-0.15cm}
\begin{align}
&||| \Phi_\Omega(F_{\tau} f)-\Phi_\Omega(f) |||\leq 4K N_1^{1/2}\| \tau \|^{1/2}_\infty,\label{mainmainmain}\vspace{-0.25cm}
\end{align}
\vspace{-0.25cm}
for all $f\in \mathcal{C}^{N_1,K}_{\mathrm{CART}}$.
}
\end{itemize}
\end{theorem}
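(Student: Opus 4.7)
The plan is to deduce parts (i) and (ii) from a single \emph{one-layer contraction inequality} that simultaneously controls the signals propagating to the next layer and the layer output generated by $\chi_{d-1}$, and then to obtain (iii) as a corollary of (i) combined with a direct bound on $\|F_\tau f - f\|_2$ for sampled cartoon functions. The starting observation is that, by the Lipschitz properties of $\rho_d$ and $P_d$, for any $u,v \in H_{N_d}$ one has $\|U_d[\lambda_d]u - U_d[\lambda_d]v\|_2 \leq R_d L_d \|(u-v) \ast g_{\lambda_d}\|_2$. Squaring, summing over $\lambda_d \in \Lambda_d$, and adding $\|(u-v) \ast \chi_{d-1}\|_2^2$ on both sides (legitimate since $\chi_{d-1}$ sits in the augmented set $\Psi_d$), the Bessel property together with the admissibility condition $\max\{B_d, B_d R_d^2 L_d^2\} \leq 1$ yields
\begin{equation*}
\sum_{\lambda_d \in \Lambda_d} \|U_d[\lambda_d]u - U_d[\lambda_d]v\|_2^2 + \|(u-v)\ast \chi_{d-1}\|_2^2 \leq \|u-v\|_2^2.
\end{equation*}
Setting $\mathcal{E}_d := \sum_{q \in \Lambda_1^d} \|U[q]f - U[q]h\|_2^2$ and applying the one-layer bound to $(u,v) = (U[q]f, U[q]h)$ for each $q \in \Lambda_1^{d-1}$ and summing yields $\mathcal{E}_d + \sum_{q \in \Lambda_1^{d-1}} \|(U[q]f - U[q]h) \ast \chi_{d-1}\|_2^2 \leq \mathcal{E}_{d-1}$. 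Telescoping from $d = 1$ to $d = D$, using $\mathcal{E}_0 = \|f-h\|_2^2$ and $\mathcal{E}_D \geq 0$, reproduces exactly $|||\Phi_\Omega(f) - \Phi_\Omega(h)|||^2 \leq \|f-h\|_2^2$, proving (i). For (ii), the assumptions $\rho_d(0) = P_d(0) = 0$ force $U[q]0 = 0$ for every path $q$, so setting $h = 0$ in (i) immediately yields $|||\Phi_\Omega(f)||| \leq \|f\|_2$.

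For part (iii), the Lipschitz bound from (i) reduces the task to proving $\|F_\tau f - f\|_2 \leq 4 K N_1^{1/2} \|\tau\|_\infty^{1/2}$. Writing $x_n := n/N_1$ so that $f[n] = c(x_n)$ and $(F_\tau f)[n] = c(x_n - \tau(x_n))$, decompose the pointwise difference according to $c = c_1 + \mathds{1}_{[a,b]} c_2$. For indices $n$ where $x_n$ and $x_n - \tau(x_n)$ lie on the same side of both $a$ and $b$, the Lipschitz bounds on $c_1, c_2$ give $|c(x_n - \tau(x_n)) - c(x_n)| \leq 2K\|\tau\|_\infty$. The remaining ``jump-affected'' indices must satisfy $|x_n - a| \leq \|\tau\|_\infty$ or $|x_n - b| \leq \|\tau\|_\infty$, so their cardinality is at most of order $N_1 \|\tau\|_\infty$, and each of them contributes at most $2K$ via $\|c_2\|_\infty \leq K$. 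Summing squared contributions produces $\|F_\tau f - f\|_2^2 \leq C_1 K^2 N_1 \|\tau\|_\infty^2 + C_2 K^2 N_1 \|\tau\|_\infty$ for absolute constants $C_1, C_2$, and since $\|\tau\|_\infty \leq 1$ the linear term dominates, yielding the claimed $N_1^{1/2} \|\tau\|_\infty^{1/2}$ scaling.

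The main obstacle is less conceptual than clerical: closing the constant in (iii) to exactly $4$ demands sharp counting of the jump-affected sample points, which is where the technical assumption $a,b \notin \{0, 1/N_1, \ldots, (N_1-1)/N_1\}$ pays off by guaranteeing that no sample $x_n$ coincides with a discontinuity of $c$ and thereby eliminating a borderline case in the case split. On the structural side, the one subtlety in (i)--(ii) is to remember that the $B_d$ appearing in the admissibility condition is the Bessel bound of the \emph{augmented} set $\Psi_d' = \{g_{\lambda_d}\}_{\lambda_d \in \Lambda_d} \cup \{\chi_{d-1}\}$, so that the one-layer contraction simultaneously absorbs the layer-$(d-1)$ output and the layer-$d$ propagation into a single telescoping estimate.
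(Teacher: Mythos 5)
Your proofs of i) and ii) are, up to notation, identical to the paper's (Appendix \ref{proof:defo}): your ``one-layer contraction inequality'' is exactly the paper's combination of the Lipschitz bounds on $P_{d}$ and $\rho_{d}$ with the Bessel property of the augmented set $\{g_{\lambda_{d}}\}_{\lambda_d\in\Lambda_d}\cup\{\chi_{d-1}\}$ and the $\max\{1,R_d^2L_d^2\}$ absorption step, your $\mathcal{E}_d$ is the paper's $b_d$, and your per-layer estimate $\mathcal{E}_d+\sum_{q\in\Lambda_1^{d-1}}\|(U[q]f-U[q]h)\ast\chi_{d-1}\|_2^2\le\mathcal{E}_{d-1}$ is precisely the telescoping relation $a_{d-1}\le b_{d-1}-b_{d}$ of \eqref{eq:p1}; the deduction of ii) by showing $U[q]0=0$ and setting $h=0$ is also verbatim the paper's argument. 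Your closing remark that the admissibility condition \eqref{weak_admiss2} must be read with respect to the augmented Bessel bound likewise matches the paper's convention.

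For iii) your strategy coincides with the paper's (Lipschitz continuity \eqref{eq:thm1} combined with a deformation bound for sampled cartoon functions, Proposition \ref{prop:main}), but your implementation of that deformation bound is where you fall short, and your assessment that closing the constant to $4$ is ``less conceptual than clerical'' is not accurate within your bookkeeping. You split pointwise into same-side and jump-affected indices and then \emph{sum squared contributions}; this structurally loses constants, because a jump-affected index carries both a Lipschitz error and a jump error (so you pay $(x+y)^2\le 2(x^2+y^2)$-type factors), and because the squared bounds of the two regimes add rather than the norms: even granting the sharp count $\text{card}(\Sigma)\le 4N_1\|\tau\|_\infty$ from Lemma \ref{lem:ind}, your route lands at roughly $\sqrt{20}\,KN_1^{1/2}\|\tau\|_\infty^{1/2}>4KN_1^{1/2}\|\tau\|_\infty^{1/2}$. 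The paper avoids this by applying the triangle inequality at the level of $\ell_2$ norms \emph{before} squaring: it writes $f-F_\tau f=(f_1-F_\tau f_1)+\mathds{1}^{N_1}_{[a,b]}(f_2-F_\tau f_2)+(\mathds{1}^{N_1}_{[a,b]}-F_\tau\mathds{1}^{N_1}_{[a,b]})F_\tau f_2$, using the factorization $F_\tau(\mathds{1}^{N_1}_{[a,b]}f_2)=(F_\tau\mathds{1}^{N_1}_{[a,b]})(F_\tau f_2)$, bounds the first two terms by $2KN_1^{1/2}\|\tau\|_\infty$ via Lemma \ref{lem:lip} and the third by $2KN_1^{1/2}\|\tau\|_\infty^{1/2}$ via Lemma \ref{lem:ind} together with $\|F_\tau f_2\|_\infty\le\|c_2\|_\infty\le K$, and only then invokes $\|\tau\|_\infty\le\|\tau\|_\infty^{1/2}$ (valid since $\|\tau\|_\infty\le 1$) to obtain the constant $4=2+2$. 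So to actually prove the statement as written you must replace your quadratic case-split by this three-term norm-level decomposition (equivalently: split the error sequence into a smooth part bounded by $2K\|\tau\|_\infty$ everywhere and a jump part of magnitude at most $K$ supported on $\Sigma$, and add their $\ell_2$ norms). Your reading of the role of the exclusion $a,b\notin\{0,\frac{1}{N_1},\dots,\frac{N_1-1}{N_1}\}$ does agree with the paper's: it yields the strict inequalities defining $\Sigma$ and hence a cardinality bound with no additive constant, without which no decay in $\|\tau\|_\infty$ is possible (Remark \ref{rem:def}).
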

\begin{proof}
See Appendix \ref{proof:defo} in the Supplement.
\end{proof}
\vspace{-0.25cm}
The Lipschitz continuity \eqref{eq:thm1}  gua\-rantees that pairwise
distances of input signals do not increase through feature extraction. As an immediate implication of the Lipschitz continuity we get robustness of the feature extractor w.r.t. additive bounded noise $\eta \in H_{N_1}$ in the sense of 
\vspace{-0.15cm}
\begin{equation*}\label{eq:thm5}
||| \Phi_\Omega(f+\eta) -\Phi_\Omega(f)||| \leq \| \eta \|_2, \vspace{-0.15cm}
\end{equation*}
for  all $f \in H_{N_1}$. 
\begin{remark}
As detailed in the proof of Theorem \ref{mainmain}, the Lipschitz continuity \eqref{eq:thm1} combined with the deformation sensitivity bound (see Proposition \ref{prop:main}
in the Supplement) for the signal class under consideration, namely sampled cartoon functions, 
establishes the deformation sensitivity bound \eqref{mainmainmain} for the feature extractor. This insight has important practical ramifications as it shows that 
whenever we have deformation sensitivity bounds for a signal class, we automatically get deformation sensitivity guarantees for the corresponding feature extractor.
\end{remark}

 From \eqref{mainmainmain} we can deduce a statement on the sensitivity of $\Phi_\Omega$ w.r.t. translations on $\R$. To this end, we first note that setting $\tau_t(x)=t$, $x\in \mathbb{R}$, for $t\in [-1,1]$, \eqref{eq:def0} becomes  \vspace{-0.1cm}
\begin{align*}
(F_{\tau_t} f)[n]=c(n/N_1-t),\quad \ n\in I_{N_1}\label{absc}.
\end{align*}
Particularizing \eqref{mainmainmain} accordingly, we obtain 
\begin{equation}\label{thm:eq7}
||| \Phi_\Omega(F_{\tau_t} f)-\Phi_\Omega(f) |||\leq 4K N_1^{1/2}|t|^{1/2},
\end{equation} 
which shows that small translations $|t|$ of the underlying analog signal $c(x)$, $x\in \mathbb{R}$, lead to small changes in the feature vector obtained by passing the resulting sampled signal through a discrete DCNN. We shall say that \eqref{thm:eq7} is a translation sensitivity bound. Analyzing the impact of deformations and translations over $\R$ on the discrete feature vector generated by the sampled analog signal closely models real-world phenomena (e.g., the jittered acquisition of an analog signal with a digital camera, where different values of $N_1$ in \eqref{eq:def0} correspond to different camera re\-solutions). 

We note that, while iii) in Theorem \ref{mainmain} is specific to cartoon functions, i) and ii) apply to all signals in $H_{N_1}$.

The strength of the results in Theorem \ref{mainmain} derives itself from the fact that condition \eqref{weak_admiss2} on the underlying module-sequence $\Omega$ is easily met in practice. To see this, we first note that $B_d$ is determined by the convolutional set $\Psi_d$, $L_d$ by the non-linearity $\rho_d$, and $R_d$ by the pooling operator $P_d$. Condition \eqref{weak_admiss2} is met if
\begin{align}\label{eq:thm3}
B_d\leq \min \{1,R_d^{-2}L_d^{-2} \}, \quad \forall \, d\in \{1,2,\dots, D\},\vspace{-0.25cm}
\end{align}
which, if not satisfied by default, can be enforced simply by normalizing the elements in $\Psi_d$. Specifically,  for $C_d:=\max\{B_d,R_d^2L_d^2 \}$ the set $\widetilde{\Psi}_d:=\{ C_d^{-1/2} g_{\la_d} \}_{\la_d \in \Lambda_d}$  has Bessel bound  $\widetilde{B_d}=\frac{B_d}{C_d}$ and hence satisfies \eqref{eq:thm3}. While this normalization does not have an impact on the results in Theorem \ref{mainmain}, there exists, however, a tradeoff between ener\-gy preservation and deformation (respectively translation) sensitivity in $\Phi_\Omega^d$ as detailed in  the next section.

\subsection{Local properties}\label{sec:deepinv}
\begin{theorem}\label{thm:deep}
 Let $\Omega=\big( (\Psi_d, \rho_d, P_d)\big)_{1\leq d \leq D}$ be a module-sequence with corresponding Bessel bounds $B_d>0$, Lip\-schitz constants $L_d>0$ of the non-linearities $\rho_d$, Lipschitz constants $R_d>0$ of the pooling operators $P_d$, and output-generating atoms $\chi_d$. Let further $L_\Omega^0:=\|\chi_0 \|_1$ and\,\footnote{We note that $\|\chi_d\|_1$ in \eqref{lipo21} can be upper-bounded (and hence substituted) by $B_{d+1}$, see Remark \ref{rem_proof} in the Supplement.}  \vspace{-0.2cm}
 \begin{equation}\label{lipo21}
 L_\Omega^d:=\| \chi_d\|_1 \Big( \prod_{k=1}^{d}B_kL_k^2R_k^2 \Big)^{1/2}, \hspace{0.5cm} d\geq 1. \vspace{-0.3cm} \end{equation}
 \begin{itemize}
 \item[i)]{The features generated in the $d$-th network layer are Lipschitz-continuous with  Lipschitz constant $L_\Omega^d$, i.e.,  
\begin{equation}\label{eq:thmb1}
||| \Phi^d_\Omega(f) -\Phi^d_\Omega(h)||| \leq L_\Omega^d \| f-h \|_2, 
\end{equation}
for  all $f,h \in H_{N_1}$, where $|||\Phi^d_\Omega(f)|||^2:=\sum_{q\in \Lambda_{1}^d} ||(U[q]f)\ast\chi_d ||^2_2.$}
\item[ii)]{\vspace{-0.19cm}If the non-linearities $\rho_k$ and the pooling operators $P_k$ satisfy $\rho_k(0)=0$ and $P_k(0)=0$, respectively, for all $k\in \{1,\dots, d\}$, then
\vspace{-0.1cm}
\begin{equation}\label{eq:thmb2}
||| \Phi^d_\Omega(f)||| \leq L^d_\Omega \| f \|_2, \hspace{0.5cm} \forall f \in H_{N_1}.
\end{equation}
}
\item[iii)]{\vspace{-0.1cm}For all $K>0$ and all $\tau:\mathbb{R}\to [-1,1]$, the features generated in the $d$-th network layer satisfy \begin{align}
&||| \Phi^d_\Omega(F_\tau f) -\Phi^d_\Omega(f)||| \leq 4 L_\Omega^d KN^{1/2}\|\tau \|_\infty^{1/2} \label{abc1a},
\end{align}
for all $f\in \mathcal{C}^{N_1,K}_{\mathrm{CART}}$, where $F_\tau f$ is defined in \eqref{eq:def0}.}
\item[iv)]{\vspace{-0.1cm}If the module-sequence employs sub-sampling, ave\-rage pooling, or max-pooling with corresponding pooling factors $S_d\in \mathbb{N}$, then
 \begin{equation}\label{eq:deepo}
 \Phi^d_\Omega(T_{m}f)=T_{\frac{m}{S_1\dots S_d}}\Phi^d_\Omega(f),
\end{equation}
 for all $f\in H_{N_1}$ and all $m\in \mathbb{Z}$ with $\frac{m}{S_1\dots S_d}\in \mathbb{Z}$. Here, $T_{m}\Phi^d_\Omega(f)$ refers to element-wise application of $T_m$, i.e., $T_{m}\Phi^d_\Omega(f):=\{T_m h \ | \ \forall h \in \Phi^d_\Omega(f)\}$.
}
\end{itemize} \end{theorem}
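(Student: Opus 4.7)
The plan is to handle the four parts in sequence: (i) is the workhorse, (ii) and (iii) follow from it by specialization, and (iv) is a separate equivariance-type argument.

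For (i), I would start from
$$|||\Phi^d_\Omega(f) - \Phi^d_\Omega(h)|||^2 = \sum_{q \in \Lambda_1^d} \|(U[q]f - U[q]h) \ast \chi_d\|_2^2,$$
and apply Young's convolution inequality $\|a \ast \chi_d\|_2 \leq \|\chi_d\|_1 \|a\|_2$ to factor out $\|\chi_d\|_1^2$. The remaining quantity $\sum_{q}\|U[q]f - U[q]h\|_2^2$ is then controlled by a layer-by-layer peel: for a single layer $k$, the Lipschitz property of $P_k$, then of $\rho_k$, and finally the Bessel bound of $\Psi_k$ give
$$\sum_{\lambda_k}\|U_k[\lambda_k]x - U_k[\lambda_k]y\|_2^2 \leq R_k^2 L_k^2 \sum_{\lambda_k}\|(x-y)\ast g_{\lambda_k}\|_2^2 \leq B_k R_k^2 L_k^2 \|x-y\|_2^2.$$
Iterating this contraction over $k=1,\dots,d$ along every path yields the factor $\prod_{k=1}^{d} B_k L_k^2 R_k^2$, which combined with $\|\chi_d\|_1^2$ is exactly $(L_\Omega^d)^2$.

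Parts (ii) and (iii) are then corollaries of (i). For (ii), the hypotheses $\rho_k(0)=0$ and $P_k(0)=0$ for $k\leq d$ give $U[q]0=0$ for every $q\in\Lambda_1^d$ by induction on $d$, so (i) applied with $h=0$ yields \eqref{eq:thmb2}. For (iii), I would set $h=F_\tau f$ in (i) and invoke the deformation sensitivity estimate for sampled cartoon functions (Proposition \ref{prop:main} in the Supplement), namely $\|F_\tau f - f\|_2 \leq 4KN_1^{1/2}\|\tau\|_\infty^{1/2}$ for $f\in \mathcal{C}^{N_1,K}_{\mathrm{CART}}$, to obtain \eqref{abc1a} immediately. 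This is the mechanism already flagged in the remark after Theorem \ref{mainmain}: a Lipschitz feature extractor automatically inherits deformation sensitivity bounds from the underlying signal class.

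For (iv), I would establish translation equivariance layer by layer from three elementary identities: convolution commutes with cyclic translation, $T_m(x\ast g) = (T_m x)\ast g$; point-wise non-linearities commute with translation, $\rho_k(T_m x) = T_m\rho_k(x)$; and each of sub-sampling, averaging, and max-pooling satisfies $P_k(T_{S_k m'}x) = T_{m'}P_k(x)$ for $m'\in \mathbb{Z}$, verified directly from the definitions in Section \ref{sec:exmppooling}. Composing these layer by layer, an input translation by $m$ propagates to an output translation by $m/(S_1\cdots S_d)$ provided this quotient is an integer, and the final $\ast\chi_d$ preserves the equivariance, giving \eqref{eq:deepo}. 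The main obstacle, in my view, is precisely this translation-pooling compatibility for averaging: the shift has to align the weight window with the summation window, which is exactly what forces the integrality condition $m/(S_1\cdots S_d)\in\mathbb{Z}$. Everything else is structurally parallel to Theorem \ref{mainmain} and reduces to repeated application of Young's inequality together with the Lipschitz/Bessel chain.
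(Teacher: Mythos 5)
Your proposal is correct and follows essentially the same route as the paper's proof in Appendix \ref{proof:nonexpan3}: Young's inequality to pull out $\|\chi_d\|_1^2$, the single-layer contraction $\sum_{\lambda_k}\|U_k[\lambda_k]x-U_k[\lambda_k]y\|_2^2\leq B_kL_k^2R_k^2\|x-y\|_2^2$ iterated along paths for i), specialization with $h=0$ and $h=F_\tau f$ (via Proposition \ref{prop:main}) for ii) and iii), and the layer-wise commutation identities for convolution, non-linearity, and pooling for iv). One minor remark: the integrality condition in iv) is not forced specifically by window alignment in averaging---it is needed for all three pooling operators (e.g., sub-sampling already requires $m/S_k\in\mathbb{Z}$ so that $f[S_kn-m]=f[S_k(n-m/S_k)]$ is a cyclic shift of the pooled signal), exactly as in the paper's unified treatment via $(P_kf)[n]=(P_k'f)[S_kn]$ with translation-commuting $P_k'$.
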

\begin{proof}
See Appendix \ref{proof:nonexpan3} in the Supplement.
\end{proof}
\vspace{-0.3cm}
One may be tempted to infer the global results \eqref{eq:thm1}, \eqref{eq:thm2}, and \eqref{mainmainmain} in Theorem \ref{mainmain} from the corresponding local results in Theorem \ref{thm:deep}, e.g., the energy bound in \eqref{eq:thm2} from \eqref{eq:thmb2} accor\-ding to
$
||| \Phi_\Omega(f)|||=\Big( \sum_{d=0}^{D-1} ||| \Phi_\Omega^d (f)|||^2 \Big)^{1/2}\leq \sqrt{D} \| f \|_2,$
where we employed $L^d_\Omega\leq 1$ owing to \eqref{weak_admiss2}. This would, however, lead to the ``global'' Lipschitz constant $L_\Omega=1$ in \eqref{eq:thm1}, \eqref{eq:thm2}, and \eqref{mainmainmain} to be replaced by $L_\Omega=\sqrt{D}$ and thereby render the corresponding results much weaker. 

Again, we emphasize that, while iii) in Theorem \ref{thm:deep} is specific to cartoon functions, i), ii), and iv) apply to all signals in $H_{N_1}$.

For a fixed network layer $d$, the ``local'' Lipschitz constant $L_\Omega^d$ determines the noise sensitivity of the features $\Phi^d_\Omega(f)$ according to
\begin{equation}\label{eq:thm_b5}
||| \Phi^d_\Omega(f+\eta) -\Phi^d_\Omega(f)||| \leq L_\Omega^d\| \eta \|_2, 
\end{equation}
where \eqref{eq:thm_b5} follows from \eqref{eq:thmb1}. Moreover, $L_\Omega^d$ via \eqref{abc1a} also quantifies the impact of deformations (or translations when $\tau_t(x)=t$, $x\in \mathbb{R}$, for $t\in [-1,1]$) on the feature vector. In practice, it may be desirable to have the features $\Phi_\Omega^d$ become more robust to additive noise and less deformation-sensitive (respectively, translation-sensitive) as we progress deeper into the network. Formally, this vertical sensitivity reduction can be induced by ensuring that $L_\Omega^{d+1}<L_\Omega^d$. Thanks to
$
L_\Omega^d=\frac{\|\chi_d\|_1 B_d^{1/2}L_dR_d}{\|\chi_{d-1} \|_1}L_\Omega^{d-1},
$
this can be accomplished by choosing the module-sequence such that $ \| \chi_d\|_1 B_d^{1/2} L_dR_d<\| \chi_{d-1}\|_1$. Note, however, that owing to \eqref{eq:thmb2} this will also reduce the signal energy contained in the features $\Phi^d_\Omega(f)$. We therefore have a tradeoff between deformation (respectively translation) sensitivity and energy preservation. Having control over this tradeoff through the choice of the module-sequence $\Omega$ may come in handy in practice.

For average pooling with uniform weights $\alpha^d_k=\frac{1}{S_d}$, $k= 0,\dots,S_d-1$ (noting that the corresponding  Lipschitz constant is $R_d=S_d^{-1/2}$, see Section  \ref{sec:exmppooling}), we get $L_\Omega^d=\| \chi_d\|_1 \Big( \prod_{k=1}^{d}\frac{B_kL_k^2}{S_k} \Big)^{1/2}$, which illustrates that pooling can have an impact on the sensitivity and energy properties of $\Phi_\Omega^d$. 

We finally turn to interpreting the translation covariance result \eqref{eq:deepo}. Owing to the condition $\frac{m}{S_1\dots S_d}\in \mathbb{Z}$, we get translation covariance only on the rough grid induced by the product of the pooling factors. In the absence of pooling, i.e., $S_k=1$, for $k\in \{ 1,\dots, d\}$, we obtain translation covariance w.r.t. the fine grid the input signal $f\in H_{N_1}$ lives on. 

\begin{remark}
We note that ScatNets \cite{Bruna} are translation-covariant on the rough grid induced by the factor $2^{J}$ corresponding to the coarsest wavelet scale. Our result in \eqref{eq:deepo} is hence in the spirit of \cite{Bruna} with the difference that the grid in our case is induced by the  pooling factors $S_k$.
\end{remark}

\urldef{\code}\url{http://www.nari.ee.ethz.ch/commth/research/}
\section[Experiments]{Experiments\footnote{Code available at \code}} \label{sec:Exp}
We consider the problem of handwritten digit classification and evaluate the performance of the feature extractor $\Phi_\Omega$ in combination with a support vector machine (SVM). The results we obtain are competitive with the state-of-the-art in the literature. The second line of experiments we perform assesses the importance of the features extracted by $\Phi_\Omega$ in facial landmark detection and in handwritten digit classification, using random forests (RF) for regression and classification, respectively. Our results are based on a DCNN with different non-linearities and pooling operators, and with tensorized (i.e., separable) wavelets as filters, sensitive to $3$ directions (horizontal, vertical, and diagonal). Furthermore, we generate outputs in all layers through low-pass filtering. Circular convolutions with the 1-D filters underlying the tensorized wavelets are efficiently implemented using the \textit{algorithme \`a trous} \cite{holschneider1989real}. 

To reduce the dimension of the feature vector, we compute features along frequency decreasing paths only \cite{Bruna}, i.e., for every node $U[q]f$, $q\in \Lambda_1^{d-1}$, we retain only those child nodes $U_{d}[\lambda_{d}]U[q]f=P_d\big(\rho_d ((U[q]f)\ast g_{\lambda_{d}}) \big)$ that correspond to wavelets $g_{\lambda_{d}}$ with scales larger than the maximum scale of the wavelets used to get $U[q]f$. We refer to \cite{Bruna} for a detailed justification of this approach for scattering networks. 

\subsection{Handwritten digit classification}\label{sec:digitclass}
We use the MNIST dataset of handwritten digits \cite{MNIST} which comprises 60,000 training and 10,000 test images of size $28 \times 28$. We set $D = 3$, and compare different network configurations, each defined by a single mo\-dule (i.e., we use the same filters, non-linearity, and pooling operator in all layers). Specifically, we consider Haar wavelets and reverse biorthogonal 2.2 (RBIO2.2) wavelets \cite{MallatW}, both with $J=3$ scales, the non-linearities described in Section \ref{sec:exmpnonlist}, and the pooling operators described in Section \ref{sec:exmppooling} (with $S_1 = 1$ and $S_2 = 2$).
We use a SVM with radial basis function (RBF) kernel for classification. To reduce the dimension of the feature vectors from 18,424 (or 50,176, for the configurations without pooling) down to 1000, we employ the supervised orthogonal least squares feature selection procedure described in \cite{Oyallon}. The penalty parameter of the SVM and the localization parameter of the RBF kernel are selected via 10-fold cross-validation for each combination of wavelet filter, non-linearity, and pooling operator. 

Table \ref{tab:svmclass} shows the resulting classification errors on the test set (obtained for the SVM trained on the full training set). 
\begin{table}
\centering
\setlength{\tabcolsep}{3pt}
{\footnotesize
\begin{tabular}{ l | c c c c  || c c c c }
 & \multicolumn{4}{ c || }{Haar} & \multicolumn{4}{c }{RBIO2.2} \\
 & abs & ReLU & tanh & LogSig & abs & ReLU & tanh & LogSig \\ \hline
n.p. & 0.55 & 0.57 & 1.41 & 1.49   & 0.50 & 0.54 & 1.01 & 1.18 \\
sub. & 0.60 & 0.58 & 1.25 & 1.45   & 0.59 & 0.62 & 1.04 & 1.13 \\
max. & 0.61 & 0.60 & 0.68 & 0.76    & 0.55 & 0.56 & 0.71 & 0.75 \\
avg. & 0.57 & 0.58 & 1.26 & 1.44   & 0.51 & 0.60 & 1.04 & 1.18 \\
\hline
\end{tabular}}
\caption{\label{tab:svmclass} Classification error in percent for handwritten digit classification using different configurations of wavelet filters,
non-linearities, and pooling operators (sub.: sub-sampling; max.: max-pooling; avg.: average-pooling; n.p.: no pooling).}
\end{table}
Configurations employing RBIO2.2 wavelets tend to yield a marginally lower classification error than those using Haar wavelets. For the tanh and LogSig non-linearities, max-pooling leads to a considerably lower classification error than other pooling operators. The configurations invol\-ving the modulus and ReLU non-linearities achieve classification accuracy competitive with the state-of-the-art \cite{Bruna} (class. err.: $0.43\%$), which is based on directional non-separable wavelets with $6$ directions without intra-layer pooling. This is interesting as the separable wavelet filters employed here can be implemented more efficiently. 

\subsection{Feature importance evaluation}\label{sec:featimp}
In this experiment, we investigate the ``importance'' of the features generated by $\Phi_\Omega$ corresponding to different layers, wavelet scales, and directions in two different learning tasks, namely, facial landmark detection and handwritten digit classification. The primary goal of this experiment is to illustrate the practical relevance of the notion of local properties of $\Phi_\Omega$ as established in Section \ref{sec:deepinv}. 
For facial landmark detection we employ a RF regressor and for handwritten digit classification a RF classifier \cite{breiman2001random}. In both cases, we fix the number of trees to $30$ and select the tree depth using out-of-bag error estimates (noting that increasing the number of trees does not significantly increase the accuracy). The impurity measure used for learning the node tests is the mean square error for facial landmark detection and the Gini impurity for handwritten digit classification. In both cases, feature importance is assessed using the Gini importance \cite{breiman1984classification}, 
averaged over all trees. 
The Gini importance $I(\theta,T)$ of feature $\theta$ in the (trained) tree $T$ is defined as $I(\theta,T) = \sum_{\ell \in T \colon \varphi(\ell) = \theta} \frac{n_\ell}{n_\mathrm{tot}} (\hat \imath_\ell - \frac{n_{\ell_L}}{n_\ell} \hat \imath_{\ell_L} - \frac{n_{\ell_R}}{n_\ell} \hat \imath_{\ell_R})$, where $\varphi(\ell)$ denotes the feature determined in the training phase for the test at node $\ell$, $n_\ell$ is the number of training samples passed through node $\ell$, $n_\mathrm{tot} = \sum_{\ell \in T} n_\ell$, $\hat \imath_\ell$ is the impurity at node $\ell$, and $\ell_L$ and $\ell_R$ denote the left and right child node, respectively, of node $\ell$. For the feature extractor $\Phi_\Omega$ we set $D = 4$, employ Haar wavelets with $J=3$ scales and the modulus non-linearity in every network layer, no pooling in the first layer and average pooling with uniform weights $1/S^2_d$, $S_d = 2$, in layers $d=2,3$.
\vspace{-0.3cm}
\paragraph{Facial landmark detection.} We use the Caltech 10,000 Web Faces data base \cite{angelova2005pruning}. Each of the $7092$ images in the data base depicts one or more faces in different contexts (e.g., portrait images, groups of people). The data base contains annotations of the positions of eyes, nose, and mouth for at least one face per image. The lear\-ning task is to estimate the positions of these facial landmarks. The annotations serve as ground truth for training and tes\-ting. We preprocess the data set as follows. The patches containing the faces are extracted from the images using the Viola-Jones face detector \cite{viola2004robust}. After discarding false positives, the patches are converted to grayscale and resampled to size $120 \times 120$ (using linear interpolation), before feeding them to the feature extractor $\Phi_\Omega$. This procedure yields a dataset containing a total of $8776$ face images. We select 80\% of the images uniformly at random to form a training set and use the remaining images for testing. We train a separate RF for each facial landmark. Following \cite{dantone2012real} we report the localization error, i.e., the $\ell_2$-distance between the estimated and the ground truth landmark positions, on the test set as a fraction of the (true) inter-ocular distance. The errors obtained are: left eye: 0.062; right eye: 0.064; nose; 0.080, mouth: 0.095. As an aside, we note that these values are comparable with the ones reported in \cite{dantone2012real} for a conditional RF using patch comparison features (evaluated on a different dataset and a larger set of facial landmarks).
\vspace{0.3cm}
\vspace{-.6cm}
\paragraph{Handwritten digit classification.} For this experiment, we again rely on the MNIST dataset. The training set is obtained by sampling uniformly at random $1,000$ images per digit from the MNIST training dataset and we use the complete MNIST test set. We train two RFs, one based on unmodified images, and the other one based on images subject to a random uniform displacement of at most $4$ pixels in (positive and negative) $x$ and $y$ direction to study the impact of offsets on feature importance. The resulting RFs achieve a classification error of 4.2\% and 9.6\%, respectively.
\vspace{-0.3cm}
\paragraph{Discussion.} Figure \ref{fig:featimpfaces} shows the cumulative feature importance (per triplet of layer index, wavelet scale, and direction, ave\-raged over all trees in the respective RF) in handwritten digit classification and in facial landmark detection. Table~\ref{tab:featimplayers} shows the corresponding cumulative feature importance for each layer.

For facial landmark detection, the features in layer $1$ clearly have the highest importance, and the feature importance decreases with increasing layer index $d$. For handwritten digit classification using the unshifted MNIST images, the cumulative importance of the features in the second/third layer relative to those in the first layer is considerably higher than in facial landmark detection (see Table \ref{tab:featimplayers}). For the translated MNIST images, the importance of the features in the second/third layer is significantly higher than those in the $0$-th and in the first layer. An explanation for this observation could be as follows: In a classification task small sensitivity to translations is beneficial. Now, according to our theory (see Section \ref{sec:deepinv}) translation sensitivity, indeed, decreases with increasing layer index for average pooling as used here. For localization of landmarks, on the other hand, the RF needs features that are covariant on the fine grid of the input image thus favoring features in the layers closer to the root.


\newcommand{\cola}{red!50!white}
\newcommand{\colb}{green!50!white}
\newcommand{\colc}{blue!50!white}
\newcommand{\errbwidth}{1}
\newcommand{\bwidth}{2.7}

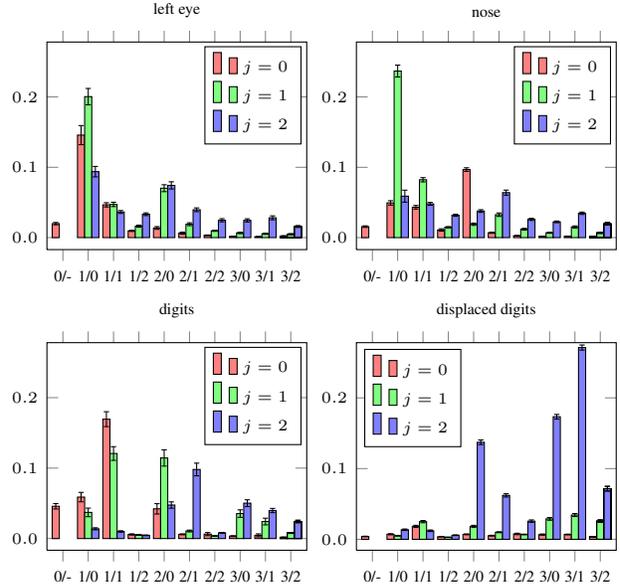
\begin{figure}
\centering
\begin{tikzpicture}[scale=1] 
\pgfplotsset{every axis/.append style={font=\tiny},}
	\begin{groupplot}[
	group style={group size=2 by 2,horizontal sep=0.08\columnwidth, vertical sep=0.15\columnwidth},
	width=0.61\columnwidth,
	ybar=0pt,
	ymax=0.26, 
	enlargelimits=0.07,
	y tick label style={
		/pgf/number format/.cd,
		fixed,
            	fixed zerofill,
		precision=1, 
		/tikz/.cd},
	symbolic x coords={0/0,1/0,1/1,1/2,2/0,2/1,2/2,3/0,3/1,3/2,4},  
	xticklabels={0/-,1/0,1/1,1/2,2/0,2/1,2/2,3/0,3/1,3/2,4},
	xtick=data,
	] 
	\nextgroupplot[title = left eye,bar width=\bwidth]
		\addplot[fill=\cola,error bars/.cd,y dir=both,y explicit,error mark options={rotate=90,mark size=1,}]
			table[x index=0,y index=1, y error index=2, restrict x to domain=0:9] {faces_leye.dat};
		\addplot[fill=\colb,error bars/.cd,y dir=both,y explicit,error mark options={rotate=90,mark size=1,}]
			table[x index=0,y index=3, y error index=4, restrict x to domain=1:9] {faces_leye.dat};
		\addplot[fill=\colc,error bars/.cd,y dir=both,y explicit,error mark options={rotate=90,mark size=1,}]
			table[x index=0,y index=5, y error index=6, restrict x to domain=1:9] {faces_leye.dat};
		\legend{$j=0$, $j=1$, $j=2$}
	\nextgroupplot[title = nose,bar width=\bwidth]
		\addplot[fill=\cola,error bars/.cd,y dir=both,y explicit,error mark options={rotate=90,mark size=1,}]
			table[x index=0,y index=1, y error index=2, restrict x to domain=0:9] {faces_nose.dat};
		\addplot[fill=\colb,error bars/.cd,y dir=both,y explicit,error mark options={rotate=90,mark size=1,}]
			table[x index=0,y index=3, y error index=4, restrict x to domain=1:9] {faces_nose.dat};
		\addplot[fill=\colc,error bars/.cd,y dir=both,y explicit,error mark options={rotate=90,mark size=1,}]
			table[x index=0,y index=5, y error index=6, restrict x to domain=1:9] {faces_nose.dat};
		\legend{$j=0$, $j=1$, $j=2$}
	\nextgroupplot[title = digits,bar width=\bwidth]
		\addplot[fill=\cola,error bars/.cd,y dir=both,y explicit,error mark options={rotate=90,mark size=1,}] 
			table[x index=0,y index=1, y error index=2, restrict x to domain=0:9] {mnist_featimp.dat};
		\addplot[fill=\colb,error bars/.cd,y dir=both,y explicit,error mark options={rotate=90,mark size=1,}]
			table[x index=0,y index=3, y error index=4, restrict x to domain=1:9] {mnist_featimp.dat};
		\addplot[fill=\colc,error bars/.cd,y dir=both,y explicit,error mark options={rotate=90,mark size=1,}] 
			table[x index=0,y index=5, y error index=6, restrict x to domain=1:9] {mnist_featimp.dat};
		\legend{$j=0$, $j=1$, $j=2$}
	\nextgroupplot[title = displaced digits,legend pos=north west,bar width=\bwidth]
		\addplot[fill=\cola,error bars/.cd,y dir=both,y explicit,error mark options={rotate=90,mark size=1,}]
			table[x index=0,y index=1, y error index=2, restrict x to domain=0:9] {mnist_disp_featimp.dat};
		\addplot[fill=\colb,error bars/.cd,y dir=both,y explicit,error mark options={rotate=90,mark size=1,}]
			table[x index=0,y index=3, y error index=4, restrict x to domain=1:9] {mnist_disp_featimp.dat};
		\addplot[fill=\colc,error bars/.cd,y dir=both,y explicit,error mark options={rotate=90,mark size=1,}]
			table[x index=0,y index=5, y error index=6, restrict x to domain=1:9] {mnist_disp_featimp.dat};
		\legend{$j=0$, $j=1$, $j=2$}	
	\end{groupplot}
	
\end{tikzpicture}
\caption{\label{fig:featimpfaces} Average cumulative feature importance and standard error for facial landmark detection and handwritten digit classification. 
The labels on the horizontal axis indicate layer index $d$/wavelet direction (0: horizontal, 1: vertical, 2: diagonal).
}
\end{figure}

\newcommand{\tabprec}{3}

\begin{table}

\pgfplotstableread[col sep=comma]{rownames.dat}\data
\pgfplotstableread{faces_leye_cumlayers.dat}\dataleye
\pgfplotstableread{faces_reye_cumlayers.dat}\datareye
\pgfplotstableread{faces_nose_cumlayers.dat}\datanose
\pgfplotstableread{faces_mouth_cumlayers.dat}\datamouth
\pgfplotstableread{mnist_featimp_cumlayers.dat}\datamnist
\pgfplotstableread{mnist_disp_featimp_cumlayers.dat}\datamnistdisp

\pgfplotstablecreatecol[copy column from table={\dataleye}{[index] 0}] {c0} {\data}
\pgfplotstablecreatecol[copy column from table={\datareye}{[index] 0}] {c1} {\data}
\pgfplotstablecreatecol[copy column from table={\datanose}{[index] 0}] {c2} {\data}
\pgfplotstablecreatecol[copy column from table={\datamouth}{[index] 0}] {c3} {\data}
\pgfplotstablecreatecol[copy column from table={\datamnist}{[index] 0}] {c4} {\data}
\pgfplotstablecreatecol[copy column from table={\datamnistdisp}{[index] 0}] {c5} {\data}

{\footnotesize
\setlength{\tabcolsep}{3pt}
\pgfplotstabletypeset[
	/pgfplots/table/display columns/0/.style={string type,column name=}, 
	/pgfplots/table/display columns/1/.style={column name=left eye,fixed,fixed zerofill,precision=\tabprec},
	/pgfplots/table/display columns/2/.style={column name=right eye,fixed,fixed zerofill,precision=\tabprec},
	/pgfplots/table/display columns/3/.style={column name=nose,fixed,fixed zerofill,precision=\tabprec},
	/pgfplots/table/display columns/4/.style={column name=mouth,fixed,fixed zerofill,precision=\tabprec},
	/pgfplots/table/display columns/5/.style={column name=digits,fixed,fixed zerofill,precision=\tabprec},
	/pgfplots/table/display columns/6/.style={,column name=disp. digits,fixed,fixed zerofill,precision=\tabprec}, 
	every head row/.style={after row=\midrule}, 
	every last row/.style={ after row=\midrule},
	]\data}
\caption{\label{tab:featimplayers} 
Cumulative feature importance per layer. Columns 1--4: facial landmark detection. Columns 5 and 6: handwritten digit classification.
}
\end{table}


\newpage
\section*{Acknowledgments} 
The authors would like to thank C. Geiger for preliminary work on the experiments in Section \ref{sec:featimp} and M. Lerjen for help with computational issues.

\bibliography{example_paper}

\begin{thebibliography}{37}
\providecommand{\natexlab}[1]{#1}
\providecommand{\url}[1]{\texttt{#1}}
\expandafter\ifx\csname urlstyle\endcsname\relax
  \providecommand{\doi}[1]{doi: #1}\else
  \providecommand{\doi}{doi: \begingroup \urlstyle{rm}\Url}\fi

\bibitem[And\'en \& Mallat(2014)And\'en and Mallat]{Anden}
And\'en, J. and Mallat, S.
\newblock Deep scattering spectrum.
\newblock \emph{IEEE Trans. Sig. Process.}, 62\penalty0 (16):\penalty0
  4114--4128, 2014.

\bibitem[Angelova et~al.(2005)Angelova, Abu-Mostafa, and
  Perona]{angelova2005pruning}
Angelova, A., Abu-Mostafa, Y., and Perona, P.
\newblock Pruning training sets for learning of object categories.
\newblock In \emph{Proc. of {IEEE} Conf. Comp. Vision Pattern Recog. (CVPR)},
  pp.\  494--501, 2005.

\bibitem[Bengio et~al.(2013)Bengio, Courville, and Vincent]{Bengio}
Bengio, Y., Courville, A., and Vincent, P.
\newblock Representation learning: A review and new perspectives.
\newblock \emph{IEEE Trans. Pattern Anal. Mach. Intell.}, 35\penalty0
  (8):\penalty0 1798--1828, 2013.

\bibitem[B{\"o}lcskei \& Hlawatsch(1997)B{\"o}lcskei and
  Hlawatsch]{bolcskei1997discrete}
B{\"o}lcskei, H. and Hlawatsch, F.
\newblock Discrete {Z}ak transforms, polyphase transforms, and applications.
\newblock \emph{IEEE Trans. Sig. Process.}, 45\penalty0 (4):\penalty0 851--866,
  1997.

\bibitem[Breiman(2001)]{breiman2001random}
Breiman, L.
\newblock Random forests.
\newblock \emph{Machine learning}, 45\penalty0 (1):\penalty0 5--32, 2001.

\bibitem[Breiman et~al.(1984)Breiman, Friedman, Stone, and
  Olshen]{breiman1984classification}
Breiman, L., Friedman, J., Stone, C.~J., and Olshen, R.~A.
\newblock \emph{Classification and regression trees}.
\newblock CRC Press, 1984.

\bibitem[Bruna \& Mallat(2013)Bruna and Mallat]{Bruna}
Bruna, J. and Mallat, S.
\newblock Invariant scattering convolution networks.
\newblock \emph{IEEE Trans. Pattern Anal. Mach. Intell.}, 35\penalty0
  (8):\penalty0 1872--1886, 2013.

\bibitem[Cand\`es et~al.(2006)Cand\`es, Demanet, Donoho, and
  Ying]{candes2006fast}
Cand\`es, E.~J., Demanet, L., Donoho, D., and Ying, L.
\newblock Fast discrete curvelet transforms.
\newblock \emph{Multiscale Modeling and Simulation}, 5\penalty0 (3):\penalty0
  861--899, 2006.

\bibitem[Dantone et~al.(2012)Dantone, Gall, Fanelli, and {Van
  Gool}]{dantone2012real}
Dantone, M., Gall, J., Fanelli, G., and {Van Gool}, L.
\newblock Real-time facial feature detection using conditional regression
  forests.
\newblock In \emph{Proc. of {IEEE} Conf. Comp. Vision Pattern Recog. (CVPR)},
  pp.\  2578--2585, 2012.

\bibitem[Daubechies(1992)]{Daubechies}
Daubechies, I.
\newblock \emph{Ten lectures on wavelets}.
\newblock Society for Industrial and Applied Mathematics, 1992.

\bibitem[Davis \& Mermelstein(1980)Davis and Mermelstein]{Davis}
Davis, S. and Mermelstein, P.
\newblock Comparison of parame\-tric representations for monosyllabic word
  recognition in continuously spoken sentences.
\newblock \emph{IEEE Trans. Acoust., Speech, and Signal Process.}, 28\penalty0
  (4):\penalty0 357--366, 1980.

\bibitem[Donoho(2001)]{Cartoon}
Donoho, D.
\newblock Sparse components of images and optimal atomic decompositions.
\newblock \emph{Constructive Approximation}, 17\penalty0 (3):\penalty0
  353--382, 2001.

\bibitem[Folland(2015)]{folland2015course}
Folland, G.~B.
\newblock \emph{A course in abstract harmonic analysis}, volume~29.
\newblock CRC Press, 2015.

\bibitem[Golub \& {Van Loan}(2013)Golub and {Van Loan}]{Golub}
Golub, G.~H. and {Van Loan}, C.~F.
\newblock \emph{Matrix computations}.
\newblock Johns Hopkins University Press, 2013.

\bibitem[Griffin et~al.(2007)Griffin, Holub, and Perona]{Caltech256}
Griffin, G., Holub, A., and Perona, P.
\newblock {Caltech-256 object category dataset}.
\newblock \url{http://authors.library.caltech.edu/7694/}, 2007.

\bibitem[Grohs et~al.(2016)Grohs, Wiatowski, and B\"olcskei]{grohs_wiatowski}
Grohs, P., Wiatowski, T., and B\"olcskei, H.
\newblock Deep convolutional neural networks on cartoon functions.
\newblock In \emph{Proc. of {IEEE} Int. Symp. on Inform. Theory (ISIT),
  \textnormal{to appear}}. 2016.

\bibitem[Holschneider et~al.(1989)Holschneider, Kronland-Martinet, Morlet, and
  Tchamitchian]{holschneider1989real}
Holschneider, M., Kronland-Martinet, R., Morlet, J., and Tchamitchian, P.
\newblock A real-time algorithm for signal ana\-lysis with the help of the
  wavelet transform.
\newblock In \emph{Wavelets}, pp.\  286--297. Springer, 1989.

\bibitem[Huang \& LeCun(2006)Huang and LeCun]{Huang}
Huang, F.~J. and LeCun, Y.
\newblock {Large-scale learning with SVM and convolutional nets for generic
  object categorization}.
\newblock In \emph{Proc. of {IEEE} Conf. Comp. Vision Pattern Recog. (CVPR)},
  pp.\  284--291, 2006.

\bibitem[Jarrett et~al.(2009)Jarrett, Kavukcuoglu, Ranzato, and LeCun]{Jarrett}
Jarrett, K., Kavukcuoglu, K., Ranzato, M.~A., and LeCun, Y.
\newblock {What is the best multi-stage architecture for object recognition?}
\newblock In \emph{Proc. of {IEEE} Int. Conf. on Computer Vision (ICCV)}, pp.\
  2146--2153, 2009.

\bibitem[Krizhevsky(2009)]{CIFAR2}
Krizhevsky, A.
\newblock Learning multiple layers of features from tiny images.
\newblock {MS} thesis, University of Toronto, 2009.

\bibitem[Kutyniok \& Labate(2012{\natexlab{a}})Kutyniok and Labate]{Shearlets}
Kutyniok, G. and Labate, D. (eds.).
\newblock \emph{Shearlets: Multiscale analysis for multivariate data}.
\newblock Birkh\"auser, 2012{\natexlab{a}}.

\bibitem[Kutyniok \& Labate(2012{\natexlab{b}})Kutyniok and
  Labate]{ShearletsIntro}
Kutyniok, G. and Labate, D.
\newblock Introduction to shearlets.
\newblock In \emph{Shearlets: Multiscale analysis for multivariate data}, pp.\
  1--38. Birkh\"auser, 2012{\natexlab{b}}.

\bibitem[LeCun \& Cortes(1998)LeCun and Cortes]{MNIST}
LeCun, Y. and Cortes, C.
\newblock {The MNIST database of handwritten digits}.
\newblock \url{http://yann.lecun.com/exdb/mnist/}, 1998.

\bibitem[LeCun et~al.(1998)LeCun, Bottou, Bengio, and Haffner]{LeCunProc}
LeCun, Y., Bottou, L., Bengio, Y., and Haffner, P.
\newblock {Gradient-based learning applied to document recognition}.
\newblock In \emph{Proc. of the IEEE}, pp.\  2278--2324, 1998.

\bibitem[LeCun et~al.(2015)LeCun, Bengio, and Hinton]{Nature}
LeCun, Y., Bengio, Y., and Hinton, G.
\newblock Deep learning.
\newblock \emph{Nature}, 521:\penalty0 436--444, 2015.

\bibitem[Lowe(2004)]{Lowe}
Lowe, D.~G.
\newblock Distinctive image features from scale-invariant keypoints.
\newblock \emph{International Journal of Computer Vision}, 60\penalty0
  (2):\penalty0 91--110, 2004.

\bibitem[Mallat(2009)]{MallatW}
Mallat, S.
\newblock \emph{A wavelet tour of signal processing: The sparse way}.
\newblock Academic Press, 3rd edition, 2009.

\bibitem[Mallat(2012)]{MallatS}
Mallat, S.
\newblock Group invariant scattering.
\newblock \emph{Comm. Pure Appl. Math.}, 65\penalty0 (10):\penalty0 1331--1398,
  2012.

\bibitem[Mutch \& Lowe(2006)Mutch and Lowe]{GaborLowe}
Mutch, J. and Lowe, D.~G.
\newblock Multiclass object recognition with sparse, localized features.
\newblock In \emph{Proc. of {IEEE} Conf. Comp. Vision Pattern Recog. (CVPR)},
  pp.\  11--18, 2006.

\bibitem[Oyallon \& Mallat(2014)Oyallon and Mallat]{Oyallon}
Oyallon, E. and Mallat, S.
\newblock Deep roto-translation scattering for object classification.
\newblock \emph{arXiv:1412.8659}, 2014.

\bibitem[Pinto et~al.(2008)Pinto, Cox, and DiCarlo]{Pinto}
Pinto, N., Cox, D.~D., and DiCarlo, J.~J.
\newblock Why is real-world visual object recognition hard.
\newblock \emph{PLoS Computational Biology}, 4\penalty0 (1):\penalty0 151--156,
  2008.

\bibitem[Ranzato et~al.(2006)Ranzato, Poultney, Chopra, and LeCun]{Poultney}
Ranzato, M., Poultney, C., Chopra, S., and LeCun, Y.
\newblock Efficient learning of sparse representations with an energy-based
  model.
\newblock In \emph{Proc. of Int. Conf. on Neural Information Processing Systems
  (NIPS)}, pp.\  1137--1144, 2006.

\bibitem[Ranzato et~al.(2007)Ranzato, Huang, Boureau, and LeCun]{Ranzato}
Ranzato, M.~A., Huang, F.~J., Boureau, Y.~L., and LeCun, Y.
\newblock Unsupervised learning of invariant feature hierarchies with
  applications to object recognition.
\newblock In \emph{Proc. of {IEEE} Conf. Comp. Vision Pattern Recog. (CVPR)},
  pp.\  1--8, 2007.

\bibitem[Serre et~al.(2005)Serre, Wolf, and Poggio]{Serre}
Serre, T., Wolf, L., and Poggio, T.
\newblock {Object recognition with features inspired by visual cortex}.
\newblock In \emph{Proc. of {IEEE} Conf. Comp. Vision Pattern Recog. (CVPR)},
  pp.\  994--1000, 2005.

\bibitem[Tola et~al.(2010)Tola, Lepetit, and Fua]{Tola}
Tola, E., Lepetit, V., and Fua, P.
\newblock Daisy: An efficient dense descriptor applied to wide-baseline stereo.
\newblock \emph{IEEE Trans. Pattern Anal. Mach. Intell.}, 5\penalty0
  (815--830), 2010.

\bibitem[Viola \& Jones(2004)Viola and Jones]{viola2004robust}
Viola, P. and Jones, M.~J.
\newblock Robust real-time face detection.
\newblock \emph{International {J}ournal of {C}omputer {V}ision}, 57\penalty0
  (2):\penalty0 137--154, 2004.

\bibitem[Wiatowski \& B{\"o}lcskei(2015)Wiatowski and
  B{\"o}lcskei]{wiatowski2015mathematical}
Wiatowski, T. and B{\"o}lcskei, H.
\newblock A mathematical theory of deep convolutional neural networks for
  feature extraction.
\newblock \emph{arXiv:1512.06293}, 2015.

\end{thebibliography}
\bibliographystyle{icml2016}

\clearpage

\numberwithin{equation}{section}
\numberwithin{theorem}{section}
\numberwithin{lemma}{section}
\numberwithin{proposition}{section}
\numberwithin{remark}{section}

\appendix

\section{Appendix: Additional numerical results}

\subsection{Handwritten digit classification}

For the handwritten digit classification experiment described in Section \ref{sec:digitclass}, Table \ref{tab:svmclasssup} shows the classification error for Daubechies wavelets with 2 vanishing moments (DB2).

\begin{table}[h!]
\centering
\setlength{\tabcolsep}{3pt}
{\footnotesize
\begin{tabular}{ l | c c c c }
 & \multicolumn{4}{ c  }{DB2} \\
 & abs & ReLU & tanh & LogSig \\ \hline
n.p. & 0.54 & 0.51 & 1.29 & 1.40 \\
sub. & 0.60 & 0.58 & 1.16 & 1.34 \\
max. & 0.57 & 0.57 & 0.75 & 0.67 \\
avg. & 0.52 & 0.61 & 1.16 & 1.27 \\
\hline
\end{tabular}}
\caption{\label{tab:svmclasssup} Classification errors in percent for handwritten digit classification using DB2 wavelet filters, different non-linearities, and different pooling operators (sub.: sub-sampling; max.: max-pooling; avg.: average-pooling; n.p.: no pooling).}
\end{table}

\subsection{Feature importance evaluation}

For the feature importance experiment described in Section \ref{sec:featimp}, Figure \ref{fig:featimpfaces2} shows the cumulative feature importance (per triplet of layer index, wavelet scale, and direction, ave\-raged over all trees in the respective RF) in facial landmark detection (right eye and mouth).

\newcommand{\plotsc}{1}
\renewcommand{\errbwidth}{2}
\renewcommand{\bwidth}{4.7}

\begin{figure}
\centering
\begin{tikzpicture}[scale=\plotsc] 
\pgfplotsset{every axis/.append style={font=\tiny},}
	\begin{axis}[
	title = right eye, 
	ybar=0pt,
	ymax=0.26, 
	enlargelimits=0.07, 
	y tick label style={
		/pgf/number format/.cd,
		fixed,
            	fixed zerofill,
		precision=2, 
		/tikz/.cd},
	symbolic x coords={0/0,1/0,1/1,1/2,2/0,2/1,2/2,3/0,3/1,3/2,4}, 
	xticklabels={0/-,1/0,1/1,1/2,2/0,2/1,2/2,3/0,3/1,3/2,4},
	xtick=data,
	bar width=\bwidth
	] 
		\addplot[fill=\cola,error bars/.cd,y dir=both,y explicit,error mark options={rotate=90,mark size=1,}]
			table[x index=0,y index=1, y error index=2, restrict x to domain=0:9] {faces_reye.dat};
		\addplot[fill=\colb,error bars/.cd,y dir=both,y explicit,error mark options={rotate=90,mark size=1,}]
			table[x index=0,y index=3, y error index=4, restrict x to domain=1:9] {faces_reye.dat};
		\addplot[fill=\colc,error bars/.cd,y dir=both,y explicit,error mark options={rotate=90,mark size=1,}]
			table[x index=0,y index=5, y error index=6, restrict x to domain=1:9] {faces_reye.dat};
		\legend{$j=0$, $j=1$, $j=2$}
	\end{axis}
\end{tikzpicture}\hspace{1.5cm}

\begin{tikzpicture}[scale=\plotsc] 
\pgfplotsset{every axis/.append style={font=\tiny},}
	\begin{axis}[
	title = mouth, 
	ybar=0pt,
	ymax=0.26, 
	enlargelimits=0.07, 
	y tick label style={
		/pgf/number format/.cd,
		fixed,
            	fixed zerofill,
		precision=2, 
		/tikz/.cd},
	symbolic x coords={0/0,1/0,1/1,1/2,2/0,2/1,2/2,3/0,3/1,3/2,4}, 
	xticklabels={0/-,1/0,1/1,1/2,2/0,2/1,2/2,3/0,3/1,3/2,4},
	xtick=data,
	bar width=\bwidth
	] 
		\addplot[fill=\cola,error bars/.cd,y dir=both,y explicit,error mark options={rotate=90,mark size=1,}]
			table[x index=0,y index=1, y error index=2, restrict x to domain=0:9] {faces_mouth.dat};
		\addplot[fill=\colb,error bars/.cd,y dir=both,y explicit,error mark options={rotate=90,mark size=1,}]
			table[x index=0,y index=3, y error index=4, restrict x to domain=1:9] {faces_mouth.dat};
		\addplot[fill=\colc,error bars/.cd,y dir=both,y explicit,error mark options={rotate=90,mark size=1,}]
			table[x index=0,y index=5, y error index=6, restrict x to domain=1:9] {faces_mouth.dat};
		\legend{$j=0$, $j=1$, $j=2$}
	\end{axis} 
\end{tikzpicture}
\caption{\label{fig:featimpfaces2}  Average cumulative feature importance and standard error for facial landmark detection. The labels on the horizontal axis indicate layer index $d$/wavelet direction (0: horizontal, 1: vertical, 2: diagonal).}
\end{figure}
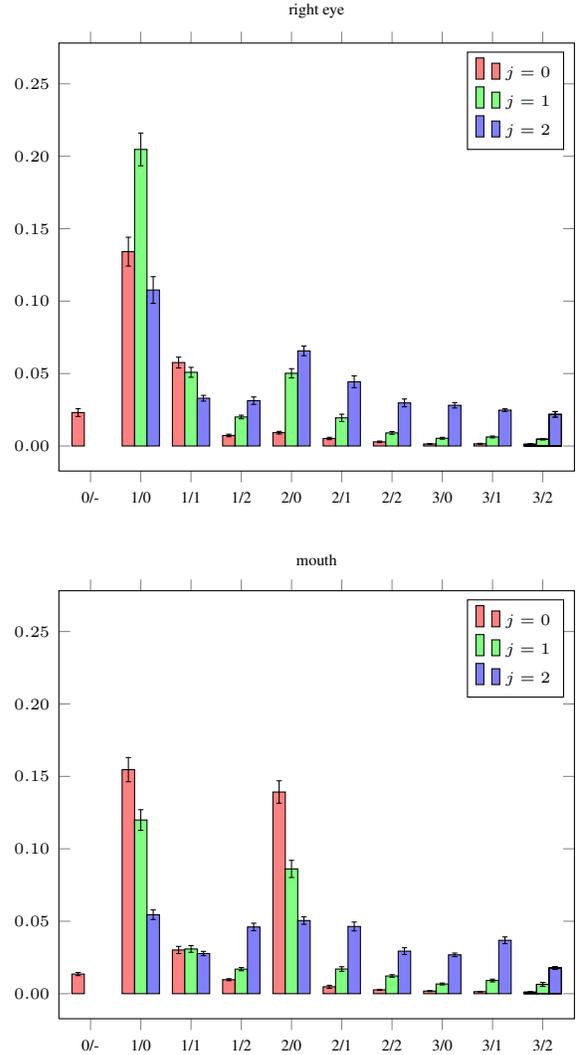

\section{Appendix: Lipschitz continuity of pooling operators}\label{app:proofLip}
We verify the Lipschitz property
$$
\|P(f)-P(h) \|_2\leq R\| f-h\|_2, \quad \forall f,h\in H_{N}, 
$$
for the pooling operators in Section \ref{sec:exmppooling}. 
\vspace{0.4cm}

\textit{Sub-sampling:} Pooling by sub-sampling is defined as
$$
P:H_N \to H_{N/S}, \quad P(f)[n]=f[Sn],\quad n \in I_{N/S},
$$
where $N/S\in \mathbb{N}$. Lipschitz continuity with $R=1$ follows from 
\begin{align*}
&\| P(f)-P(h) \|_2^2=\sum_{n\in I_{N/S}}| f[Sn] -h[Sn]|^2\\
&\leq \sum_{n\in I_N} |f[n]-h[n]|^2=\|f-h \|_2^2,\quad \forall f,h \in H_N.
\end{align*}

\textit{Averaging:} Pooling by averaging is defined as
$$
P:H_N \to H_{N/S}, \quad P(f)[n]=\sum_{k=Sn}^{Sn+S-1}\alpha_{k-Sn}f[k],
$$
for $n\in I_{N/S}$, where $N/S\in \mathbb{N}$. We start by setting $\alpha':=\max_{k\in \{0,\dots,S-1 \}}|\alpha_{k}|$. Then,  
\begin{align}
&\| P(f)-P(h) \|_2^2\nonumber\\
&=\sum_{n\in I_{N/S}}\Big|\sum_{k=Sn}^{Sn+S-1}\alpha_{k-Sn}(f[k]-h[k]) \Big|^2\nonumber\\
&\leq\sum_{n\in I_{N/S}}\Big|\sum_{k=Sn}^{Sn+S-1}\alpha'|f[k]-h[k]| \Big|^2\nonumber\\
&\leq\alpha'^2 S \sum_{n\in I_{N/S}} \sum_{k=Sn}^{Sn+S-1}\Big|f[k]-h[k] \Big|^2\label{eq:lipo1}\\
&=\alpha'^2 S \sum_{n\in I_N}\Big|f[k]-h[k] \Big|^2=\alpha'^2 S\|f-h \|^2_2\nonumber,
\end{align}
where we used $\sum_{k\in I_S}|f[k]-h[k]|\leq S^{1/2} \| f-h \|_2$, $f,h\in H_{S}$, to get \eqref{eq:lipo1}, see, e.g., \cite{Golub}.

\textit{Maximization:} Pooling by maximization is defined as
$$
P:H_N \to H_{N/S}, \quad P(f)[n]=\max_{k\in \{ Sn,\dots, Sn+S-1\}}|f[k]|,
$$
for $n\in I_{N/S}$, where $N/S\in \mathbb{N}$. We have  
\begin{align}
&\| P(f)-P(h) \|_2^2\nonumber\\
&=\sum_{n\in I_{N/S}}\big|\max_{k\in \{ Sn,\dots, Sn+S-1\}}|f[k]|\nonumber \\
&- \max_{k\in \{ Sn,\dots, Sn+S-1\}}|h[k]| \big|^2 \nonumber\\
&\leq \sum_{n\in I_{N/S}}\max_{k\in \{ Sn,\dots, Sn+S-1\}} \big|f[k]-h[k] \big|^2 \label{eq:maxpool2} \\
&\leq\sum_{n\in I_{N/S}}\sum_{k=0}^{S-1} |f[Sn+k]-h[Sn+k]|^2\label{eq:maxpool3} \\
&=\| f-h\|_2^2\nonumber,
\end{align}
where we employed the reverse triangle inequality $\big| \| f \|_\infty - \| h \|_\infty\big| \leq \| f-h\|_\infty$, $f,h\in H_{S}$, to get  \eqref{eq:maxpool2}, and in \eqref{eq:maxpool3} we used $\| f \|_\infty\leq \|f \|_2$, $f\in H_{S}$, see, e.g.,  \cite{Golub}.

\section{Appendix: Proof of Theorem \ref{mainmain}}\label{proof:defo}
We start by proving i). The key idea of the proof is---similarly to the proof of Proposition 4 in \cite{wiatowski2015mathematical}---to employ teles\-coping  series arguments. For ease of notation, we let $f_q:=U[q]f$ and $h_q:=U[q]h$, for $f,h \in H_{N_1}$,  $q\in \Lambda_1^d$. With \eqref{eq:featspacenorm} we have 
\begin{equation*}
\begin{split}
||| \Phi_\Omega(f) -\Phi_\Omega(h)|||^2&=\sum_{d=0}^{D-1}\underbrace{\sum_{q\in \Lambda_{1}^d} ||(f_q-h_q)\ast\chi_d ||^2_2}_{=:a_d}.\\ 
\end{split}
\end{equation*}
The key step is then to show that $a_d$ can be upper-bounded according to
\begin{equation}\label{eq:p1}
\begin{split}
a_d\leq b_d-b_{d+1},\hspace{0.5cm}  \,d=0, \dots,D-1,
\end{split}
\end{equation}
with $b_d:=\sum_{q\in\Lambda_1^d}\| f_q-h_q\|^2_2,$ for  $d=0, \dots,D$,
and to note that
\begin{equation*}\label{eq:p2}
\begin{split}
\sum_{d=0}^{D-1}a_d&\leq\sum_{d=0}^{D-1}(b_d-b_{d+1})= b_0 - \underbrace{b_{D}}_{\geq0}
\leq b_0\\
&= \sum_{q\in\Lambda_1^0}\| f_q-h_q\|^2_2=\| f-h\|^2_2,
\end{split}
\end{equation*}
which then yields  \eqref{eq:thm1}. Writing out \eqref{eq:p1}, it follows that we need to establish 
\begin{align}
 &\sum_{q\in\Lambda_1^d}\| (f_q-h_q)\ast\chi_d\|^2_2 \leq \sum_{q\in \Lambda_{1}^d} ||f_q-h_q\|^2_2\nonumber\\
 -&\sum_{q\in\Lambda_1^{d+1}}\| f_q - h_q\|^2_2, \hspace{0.5cm}  d=0,\dots, D-1\label{eq:p55}.
\end{align}
We start by examining the second sum on the right-hand side (RHS) in \eqref{eq:p55}. Every path 
\begin{equation*}\label{eq:ee1}
\tilde{q} \in \Lambda_{1}^{d+1}=\underbrace{\Lambda_{1}\times\dots\times\Lambda_{d}}_{=\Lambda_{1}^d}\times\Lambda_{d+1}
\end{equation*}
of length $d+1$ can be decomposed into a path $q \in \Lambda_1^{d}$ of length $d$ and an index $\lambda_{d+1} \in \Lambda_{d+1}$ according to $\tilde{q}=(q,\lambda_{d+1})$. Thanks to \eqref{aaaaa} we have  $U[\tilde{q}]=U[(q,\lambda_{d+1})]=U_{d+1}[\lambda_{d+1}]U[q]$, which yields 
\begin{align}
&\sum_{\tilde{q}\in\Lambda_1^{d+1}}\| f_{\tilde{q}}-h_{\tilde{q}}\|^2_2\nonumber=\sum_{q\in\Lambda_1^{d}}\sum_{\lambda_{d+1}\in \Lambda_{d+1}}\| U_{d+1}[\lambda_{d+1}]f_q\nonumber\\
&-U_{d+1}[\lambda_{d+1}]h_q\|^2_2\label{eq:a6}.
\end{align}
Substituting \eqref{eq:a6} into \eqref{eq:p55} and rearranging terms, we obtain 
\begin{align}
&\sum_{q\in\Lambda_1^d}\Big(\| (f_q-h_q)\ast\chi_d\|^2_2\label{eq:pp9}\\
 &+\sum_{\lambda_{d+1}\in\Lambda_{d+1}}\| U_{d+1}[\lambda_{d+1}]f_q-U_{d+1}[\lambda_{d+1}]h_q\|^2_2\Big)\label{eq:p9}\\&\leq \sum_{q\in \Lambda_{1}^d} ||f_q - h_q\|^2_2, \hspace{0.5cm}  d=0,\dots, D-1\label{eq:ppp9}.
\end{align}
We next note that the sum over the index set $\Lambda_{d+1}$ inside the brackets in  \eqref{eq:pp9}-\eqref{eq:p9} satisfies 
\begin{align}&
\sum_{\lambda_{d+1}\in\Lambda_{d+1}}\| U_{d+1}[\lambda_{d+1}]f_q-U_{d+1}[\lambda_{d+1}]h_q\|^2_2\nonumber\\
&=\sum_{\lambda_{d+1}\in\Lambda_{d+1}}\| P_{d+1}\big(\rho_{d+1}(f_q\ast g_{\lambda_{d+1}})\big)\nonumber\\
&\hspace{1.9cm}-P_{d+1}\big(\rho_{d+1}(h_q\ast g_{\lambda_{d+1}})\big)\|^2_2\nonumber\\
&\leq R^2_{d+1}\sum_{\lambda_{d+1}\in\Lambda_{d+1}}\| \rho_{d+1}(f_q\ast g_{\lambda_{d+1}})\label{eq:ieee}\\
&\hspace{1.9cm}-\rho_{d+1}(h_q\ast g_{\lambda_{d+1}})\|^2_2\label{eq:ee2}\\
&\leq R_{d+1}^2L_{d+1}^2\sum_{\lambda_{d+1}\in\Lambda_{d+1}}\| (f_q-h_q)\ast g_{\lambda_{d+1}}\|_2^2\label{eq:ieee1}, 
\end{align}
where we employed the Lipschitz continuity of  $P_{d+1}$ in \eqref{eq:ieee}-\eqref{eq:ee2} and the Lipschitz continuity of $\rho_{d+1}$ in \eqref{eq:ieee1}. Substituting the sum over the index set $\Lambda_{d+1}$ inside the brackets in  \eqref{eq:pp9}-\eqref{eq:p9} by the upper bound \eqref{eq:ieee1} yields 
\begin{align}
&\sum_{q\in\Lambda_1^d}\Big(\| (f_q-h_q)\ast\chi_d\|^2_2\nonumber\\
&+\sum_{\lambda_{d+1}\in\Lambda_{d+1}}\| U_{d+1}[\lambda_{d+1}]f_q-U_{d+1}[\lambda_{d+1}]h_q\|^2_2\Big)\nonumber\\
&\leq\sum_{q\in\Lambda_1^d}\max\{1,R_{d+1}^{2}L^2_{d+1}\}\Big(\| (f_q-h_q)\ast\chi_d\|^2_2\label{eq:ee-199}\\
&+\sum_{\lambda_{d+1}\in\Lambda_{d+1}}\| (f_q-h_q)\ast g_{\lambda_{d+1}}\|_2^2 \Big),\label{eq:e-199}
\end{align}
for $d=0, \dots,D-1$. As  $\{ g_{\lambda_{d+1}}\}_{\lambda_{d+1}\in\Lambda_{d+1}}\cup\{ \chi_d\}$ are atoms of the convolutional set $\Psi_{d+1}$, and $f_q, h_q \in H_{N_{d+1}}$, we have
\begin{align*}
&\| (f_q-h_q)\ast\chi_d\|^2_2+\sum_{\lambda_{d+1}\in\Lambda_{d+1}}\| (f_q-h_q)\ast g_{\lambda_{d+1}}\|_2^2\\
&\leq B_{d+1}\| f_q-h_q\|^2_2,
\end{align*}
which, when used in \eqref{eq:ee-199}-\eqref{eq:e-199} yields
\begin{align}
&\sum_{q\in\Lambda_1^d}\Big(\| (f_q-h_q)\ast\chi_d\|^2_2\nonumber\\
&+\sum_{\lambda_{d+1}\in\Lambda_{d+1}}\| U_{d+1}[\lambda_{d+1}]f_q-U_{d+1}[\lambda_{d+1}]h_q\|^2_2\Big)\nonumber\\
\leq&\sum_{q\in\Lambda_1^d}\max\{B_{d+1},B_{d+1}R_{d+1}^{2}L^2_{d+1}\}\| f_q-h_q\|^2_2, \label{eq:e-200}
\end{align}
for $d=0,\dots,D-1$. Finally, invoking \eqref{weak_admiss2} in \eqref{eq:e-200} we get \eqref{eq:pp9}-\eqref{eq:ppp9} and hence \eqref{eq:p1}. This completes the proof of i).

We continue with ii).  The key step in establishing \eqref{eq:thm2} is to show that for $\rho_d(0)=0$ and $P_d(0)=0$, for $d\in \{1,\dots,D-1\}$, the feature extractor $\Phi_\Omega$ satisfies $\Phi_\Omega(0)=0$, and to employ \eqref{eq:thm1} with $h=0$ which yields
$$
|||\Phi(f)|||\leq \| f\|,
$$
for $f\in H_{N_1}$. It remains to prove that $\Phi_\Omega(h)=0$ for $h=0$. For $h=0$, the operator $U_d$, $d\in \{1,2,\dots, D\}$, defined in \eqref{eq:e1} satisfies 
\begin{equation*}\label{eq:dd1aa}
(U_d[\lambda_d]h)=\underbrace{P_d\big( \underbrace{\rho_d(\underbrace{h\ast g_{\lambda_d}}_{=0})}_{=0}\big)}_{=0}, 
\end{equation*}
for $\lambda_d \in \Lambda_d$, by assumption. With the definition of $U[q]$ in \eqref{aaaaa} this then yields $(U[q]h)=0$ for $h=0$ and all $q\in \Lambda_1^d$. 
$\Phi_\Omega(0)=0$ finally follows from 
\begin{equation}\label{eq:iiii5}
\begin{split}
\Phi_\Omega(h)&=\bigcup_{d=0}^{D-1}\big\{ \underbrace{\big(U[q]h \big) \ast \chi_d}_{=0} \big\}_{q\in \Lambda_1^d}=0.
\end{split}
\end{equation}

We proceed to iii). The proof of the deformation sensitivity bound \eqref{mainmainmain} is based on two key ingredients. The first one is the Lipschitz continuity result stated in \eqref{eq:thm1}. The second ingredient, stated in Proposition \ref{prop:main} in Appendix \ref{proof:uppersuper}, is an upper bound on the deformation error $\| f-F_{\tau}f \|_2$ given by  
\begin{equation}\label{eq:p23}
\| f-F_{\tau} f\|_2 \leq 4KN_1^{1/2}\| \tau \|_{\infty}^{1/2},
\end{equation}
where $f\in C_{\text{CART}}^{N_1,K}$. We now show how \eqref{eq:thm1} and \eqref{eq:p23} can be combined to establish \eqref{mainmainmain}. To this end, we first apply \eqref{eq:thm1} with $h:=(F_{\tau}f)$ to get
\begin{equation}\label{eq:p20}
||| \Phi_\Omega(f) -\Phi_\Omega(F_{\tau} f)||| \leq \| f-F_{\tau} f \|_2,
\end{equation}
for $f   \in C_{\text{CART}}^{N_1,K} \subseteq H_{N_1}$, $N_1\in \mathbb{N}$, and $K>0$, and then replace the RHS of \eqref{eq:p20} by the RHS of \eqref{eq:p23}. This completes the proof of iii). 

\section{Appendix: Proposition \ref{prop:main}}\label{proof:uppersuper}
\begin{proposition}\label{prop:main}
For every $N\in \mathbb{N}$, every $K>0$, and every $\tau:\mathbb{R}\to [-1,1]$, we have 
 \begin{equation}\label{eq:main}ƒ
  	\|f - F_\tau f \|_2\le 4KN^{1/2}\|\tau\|_\infty^{1/2},
  \end{equation}
for all $f\in \mathcal{C}^{N,K}_{\mathrm{CART}}$.

\end{proposition}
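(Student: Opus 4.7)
The plan is to decompose $c = c_1 + \mathds{1}_{[a,b]} c_2$ as in Definition~\ref{def:1d-cartoon} and correspondingly split the sampled signal as $f = f_1 + f_\mathrm{J}$, with $f_1[n] := c_1(n/N)$ and $f_\mathrm{J}[n] := \mathds{1}_{[a,b]}(n/N)\, c_2(n/N)$. Since $F_\tau$ acts linearly on the underlying continuous representative, I can apply the triangle inequality to reduce the task to bounding $\|f_1 - F_\tau f_1\|_2$ and $\|f_\mathrm{J} - F_\tau f_\mathrm{J}\|_2$ separately, aiming at the individual estimates $K N^{1/2}\|\tau\|_\infty^{1/2}$ and $3 K N^{1/2}\|\tau\|_\infty^{1/2}$.

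For the smooth piece, the $K$-Lipschitz property of $c_1$ immediately gives the pointwise estimate $|c_1(n/N) - c_1(n/N - \tau(n/N))| \le K |\tau(n/N)| \le K\|\tau\|_\infty$; summing over $n \in I_N$ and invoking $\|\tau\|_\infty \le 1$ then yields $\|f_1 - F_\tau f_1\|_2 \le K N^{1/2}\|\tau\|_\infty \le K N^{1/2}\|\tau\|_\infty^{1/2}$.

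For the jump piece I would partition $I_N$ into three sets: $A$ contains indices $n$ for which both $n/N$ and $n/N - \tau(n/N)$ lie in $[a,b]$; $B$ contains indices for which both lie outside $[a,b]$; and $C = I_N \setminus (A \cup B)$ is the \emph{transition} set. Indices in $B$ contribute nothing, indices in $A$ inherit the Lipschitz bound on $c_2$ and jointly contribute at most $N K^2 \|\tau\|_\infty^2$ to $\|f_\mathrm{J} - F_\tau f_\mathrm{J}\|_2^2$, while each index in $C$ contributes at most $\|c_2\|_\infty^2 \le K^2$. A short case analysis on the signs of $n/N - a$ and $n/N - b$ reveals that any $n \in C$ must satisfy $n/N \in (a - \|\tau\|_\infty, a + \|\tau\|_\infty) \cup (b - \|\tau\|_\infty, b + \|\tau\|_\infty)$, because crossing an endpoint of $[a,b]$ forces the displacement $|\tau(n/N)|$ to exceed the distance from $n/N$ to the nearer endpoint.

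The main obstacle will be turning this geometric observation into a clean upper bound of the form $|C| \le c_0\, N \|\tau\|_\infty$ with a controlled constant. Since the sampling grid has spacing $1/N$, each of the two length-$2\|\tau\|_\infty$ windows around $a$ and $b$ contains on the order of $N\|\tau\|_\infty$ grid points, and the exclusion $a, b \notin \{k/N : k \in I_N\}$ built into Definition~\ref{def:1d-cartoon} is precisely what keeps the geometry at the jump endpoints well-posed. Once the count for $|C|$ is in hand, combining it with the $A$-contribution gives $\|f_\mathrm{J} - F_\tau f_\mathrm{J}\|_2^2 \lesssim K^2 N \|\tau\|_\infty$, and fixing constants yields $\|f_\mathrm{J} - F_\tau f_\mathrm{J}\|_2 \le 3 K N^{1/2}\|\tau\|_\infty^{1/2}$. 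Adding the smooth-part bound produces the desired $4 K N^{1/2}\|\tau\|_\infty^{1/2}$; it is this jump-counting step that produces the characteristic $\|\tau\|_\infty^{1/2}$ scaling for cartoon functions, in contrast to the linear $\|\tau\|_\infty$ dependence one obtains for purely Lipschitz signals.
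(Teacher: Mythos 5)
Your proposal is correct and follows essentially the same route as the paper's proof: decompose $f$ into a sampled Lipschitz part and a jump part, bound the former linearly in $\|\tau\|_\infty$ (the paper's Lemma \ref{lem:lip}), and handle the latter via a Lipschitz estimate on samples that stay inside $[a,b]$ plus a count of boundary-crossing samples lying in the two $\|\tau\|_\infty$-windows around $a$ and $b$---your transition set $C$ and the deferred bound $|C|\leq c_0 N\|\tau\|_\infty$ are precisely the paper's $S\subseteq \Sigma$ inclusion and the cardinality estimate $\text{card}(\Sigma)\leq 4N\|\tau\|_\infty$ in Lemma \ref{lem:ind}, with the exclusion $a,b\notin\{0,\frac{1}{N},\dots,\frac{N-1}{N}\}$ playing the identical role. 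Your $A$/$B$/$C$ case analysis merely repackages the paper's factorization $F_\tau\big(\mathds{1}^N_{[a,b]}f_2\big)=\big(F_\tau\mathds{1}^N_{[a,b]}\big)\big(F_\tau f_2\big)$ followed by the bounds $\big\|\mathds{1}^N_{[a,b]}(f_2-F_\tau f_2)\big\|_2\leq KN^{1/2}\|\tau\|_\infty$ and $\big\|\big(\mathds{1}^N_{[a,b]}-F_\tau\mathds{1}^N_{[a,b]}\big)F_\tau f_2\big\|_2\leq 2KN^{1/2}\|\tau\|_\infty^{1/2}$, so it is the same argument in slightly different bookkeeping (even yielding a marginally better constant $1+\sqrt{5}$ in place of $4$).
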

\begin{remark}\label{rem:def}
As already mentioned at the end of Section~\ref{sec-cartoon}, excluding the interval boundary points $a,b$ in the definition of sampled cartoon functions $\mathcal{C}^{N,K}_{\mathrm{CART}}$ (see Definition~\ref{def:1d-cartoon}) is necessary for technical reasons. Specifically, without imposing this exclusion, we can not expect to get deformation sensitivity results of the form \eqref{eq:main}. This can be seen as follows. Let us assume that we seek a  bound of the form $\|  f-F_\tau f\|_2\leq C_{N,K} \| \tau \|_\infty^\alpha$, for some $C_{N,K}>0$ and some $\alpha>0$,  that applies to all $f[n]=c(n/N)$, $n\in I_N$, with $c\in \mathcal{C}^{K}_{\mathrm{CART}}$. Take $\tau(x)=1/N$, in which case the deformation $(F_\tau f)[n]=c(n/N - 1/N)$ amounts to a simple translation by $1/N$ and $\|\tau \|_\infty = 1/N\leq 1$. Let $c(x)=\mathds{1}_{[0,2/N]}(x)$. Then $c\in \mathcal{C}^{K}_{\mathrm{CART}}$ for $K=1$ and $\|  f-F_\tau f\|_2=\sqrt{2}$, which obviously does not decay with $\|\tau \|_\infty^\alpha=N^{-\alpha}$ for some $\alpha>0$. We note that this phenomenon occurs only in the discrete case.
\end{remark}

\begin{proof}
The proof of \eqref{eq:main} is based on judiciously combi\-ning deformation sensitivity bounds for the sampled  components $c_1(n/N),c_2(n/N)$, $n \in I_N$, in $(c_1+\mathds{1}_{[a,b]}c_2) \in \mathcal{C}^K_{\mathrm{CART}}$, and the sampled  indicator function $\mathds{1}_{[a,b]}(n/N)$, $n \in I_N$. The first bound, stated in Lemma \ref{lem:lip} below, reads
 \begin{equation}\label{eq:main3}
  	\|f - F_\tau f\|_2\le C N^{1/2}\|\tau\|_\infty,
  \end{equation}
and applies to discrete-time signals $f[n]=f(n/N)$, $n \in I_N$, with $f:\mathbb{R}\to \mathbb{C}$ satisfying the Lipschitz property with Lipschitz constant $C$. The second bound we need, stated in Lemma \ref{lem:ind} below,  is given by
  \begin{equation}\label{eq:main2}
  	\|\mathds{1}^N_{[a,b]} - F_\tau \mathds{1}^N_{[a,b]} \|_2 \leq 2 N^{1/2} \|\tau\|_\infty^{1/2},
  \end{equation}
  and applies to sampled indicator functions $\mathds{1}^N_{[a,b]}[n]:=\mathds{1}_{[a,b]}(n/N)$, $n \in I_N$, with $a,b \notin \{0,\frac{1}{N},\dots, \frac{N-1}{N}\}.$ We now show how \eqref{eq:main3} and \eqref{eq:main2} can be combined to establish  \eqref{eq:main}. For a sampled cartoon function $f\in \mathcal{C}^{N,K}_{\mathrm{CART}}$, i.e., 
  \begin{align*} 
 f[n]&=c_1(n/N)+\mathds{1}_{[a,b]}(n/N)c_2(n/N)\\
 &=:f_1[n]+\mathds{1}^N_{[a,b]}[n]f_2[n], \hspace{0.3cm} n\in I_N,
 \end{align*} 
we have
\begin{align}
&\|f-F_\tau f \|_2\leq \| f_1 - F_\tau f_1 \|_2 +\|\mathds{1}^N_{[a,b]}(f_2-F_\tau f_2) \|_2 \nonumber \\ &
+ \|  (\mathds{1}^N_{[a,b]} - F_\tau \mathds{1}^N_{[a,b]})(F_\tau f_2)\|_2 \label{eq:p1p1}\\
&\leq  \| f_1 - F_\tau f_1 \|_2 +\|f_2-F_\tau f_2 \|_2 \nonumber\\
&+ \| \mathds{1}^N_{[a,b]} - F_\tau \mathds{1}^N_{[a,b]}\|_2\| F_\tau f_2 \|_\infty,\nonumber
\end{align}
where in \eqref{eq:p1p1} we used 
\begin{align*}
\big(F_\tau&(\mathds{1}^N_{[a,b]}f_2)\big)[n]=(\mathds{1}_{[a,b]}c_2)(n/N-\tau(n/N))\\
&=\mathds{1}_{[a,b]}(n/N-\tau(n/N))c_2((n/N-\tau(n/N)))\\
&=(F_\tau\mathds{1}^N_{[a,b]})[n](F_\tau f_2)[n].
\end{align*}
With the upper bounds \eqref{eq:main3} and \eqref{eq:main2}, invoking properties of $\mathcal{C}^{N,K}_{\mathrm{CART}}$ (namely, (i) $c_1,c_2$ satisfy the Lipschitz pro\-perty with Lipschitz constant $C=K$ and hence $f_1[n]=c_1(n/N),f_2[n]=c_2(n/N)$, $n\in I_N$, satisfy \eqref{eq:main3} with $C=K$, and (ii) $\| F_\tau f_2 \|_\infty=\sup_{n\in I_N}|(F_\tau f_2)[n]| =\sup_{n\in I_N}|c_2(n/N-\tau(n/N))|\leq \sup_{x\in \mathbb{R}}|c_2(x)|=\| c_2\|_\infty\leq K)$, this yields 
\begin{align*}
\|f-F_\tau f \|_2 \leq&  \, 2KN^{1/2}\, \| \tau \|_\infty +  2K  N^{1/2} \|\tau\|_\infty^{1/2}\label{eq:p1p}\\
\leq&\, 4KN^{1/2} \| \tau \|_\infty^{1/2} \nonumber,
\end{align*}
where in the last step we used $\| \tau \|_\infty\leq \| \tau \|_\infty^{1/2}$, which is thanks to the assumption $\| \tau \|_\infty \leq 1$. This completes the proof of \eqref{eq:main}. 
\end{proof}
It remains to establish \eqref{eq:main3} and \eqref{eq:main2}.
\begin{lemma}\label{lem:lip}
Let  $c:\mathbb{R}\to \mathbb{C}$ be Lipschitz-continuous with Lipschitz constant $C$. Let further $f[n]:=c(n/N)$, $n \in I_N$. Then, 
	\begin{equation*}\label{eq:pp1}
		\|f - F_\tau f\|_2 \le CN^{1/2}  \|\tau\|_\infty.
	\end{equation*}
\end{lemma}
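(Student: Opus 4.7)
The plan is to unpack the definitions and then apply the Lipschitz property of $c$ pointwise at each sample, after which the bound follows from summing $N$ identical estimates. Concretely, I would start from
\begin{equation*}
\|f - F_\tau f\|_2^2 = \sum_{n \in I_N} \bigl| c(n/N) - c(n/N - \tau(n/N)) \bigr|^2,
\end{equation*}
which is just the definitions of $f[n] = c(n/N)$ and $(F_\tau f)[n] = c(n/N - \tau(n/N))$ from \eqref{eq:def0}.

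Next, I would apply the Lipschitz assumption on $c$ to each summand, yielding
\begin{equation*}
\bigl| c(n/N) - c(n/N - \tau(n/N)) \bigr| \le C\,|\tau(n/N)| \le C\,\|\tau\|_\infty,
\end{equation*}
where the second inequality uses the $\ell^\infty$ bound on $\tau$. Summing the squared estimate over the $N$ sample points $n \in I_N$ gives $\|f - F_\tau f\|_2^2 \le N C^2 \|\tau\|_\infty^2$, and taking square roots finishes the proof.

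There is essentially no obstacle here — the argument is a routine combination of the Lipschitz hypothesis with the trivial counting bound $|I_N| = N$. The main reason this lemma appears as a separate building block is that in the proof of Proposition~\ref{prop:main} it must be paired with the harder estimate \eqref{eq:main2} for the sampled indicator, which exhibits the genuinely discrete $\|\tau\|_\infty^{1/2}$ scaling; the present lemma is the ``easy'' Lipschitz-continuous piece of that decomposition and only contributes the $\|\tau\|_\infty$ term.
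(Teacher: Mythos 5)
Your proof is correct and matches the paper's own argument step for step: unpack the definitions of $f$ and $F_\tau f$, apply the Lipschitz property of $c$ pointwise to bound each summand by $C\|\tau\|_\infty$, and sum over the $N$ samples before taking square roots. Your closing remark about the lemma's role as the ``easy'' $\|\tau\|_\infty$ piece in the decomposition behind Proposition~\ref{prop:main} is also consistent with how the paper uses it.
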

\begin{proof}
Invoking the Lipschitz property of $c$ accor\-ding to
\begin{align*}
		\|f - F_\tau f \|_2^2&=\sum_{n\in I_N} |f[n] - (F_\tau f)[n]|^2\\&=\sum_{n\in I_N} |c(n/N) -  c(n/N-\tau(n/N))|^2\\ &\leq C^2 \sum_{n\in I_N}|\tau(n/N)|^2\leq C^2N \|\tau \|_\infty^2
\end{align*}
completes the proof.
\end{proof}
We continue with a deformation sensitivity result for sampled indicator functions $\mathds{1}_{[a,b]}(x)$.
\begin{lemma}\label{lem:ind}
Let $[a,b]\subseteq [0,1]$ and set $\mathds{1}^N_{[a,b]}[n]:=\mathds{1}_{[a,b]}(n/N)$, $n \in I_N$, with $a,b \notin \{0,\frac{1}{N},\dots, \frac{N-1}{N}\}.$ Then, we have
 \begin{equation*}\label{eq:cartdef}
 	\|\mathds{1}^N_{[a,b]} - F_\tau \mathds{1}^N_{[a,b]} \|_2
 	\le 2N^{1/2} \|\tau\|_\infty^{1/2}.
 \end{equation*}
\end{lemma}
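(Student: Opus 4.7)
My plan is to expand the $\ell^2$-norm and reduce the problem to counting the number of grid points $n/N$ at which the deformation flips the indicator value. Concretely, I would first write
\begin{equation*}
\|\mathds{1}^N_{[a,b]} - F_\tau \mathds{1}^N_{[a,b]}\|_2^2 = \sum_{n \in I_N} \bigl|\mathds{1}_{[a,b]}(n/N) - \mathds{1}_{[a,b]}(n/N - \tau(n/N))\bigr|^2,
\end{equation*}
and observe that each summand is $\{0,1\}$-valued, equalling $1$ precisely when $n/N$ and $n/N - \tau(n/N)$ lie on opposite sides of the boundary point $a$ or the boundary point $b$. Hence the squared norm equals the cardinality of a set $D\subseteq I_N$ of ``crossing'' indices.

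Next, I would argue that any $n \in D$ must satisfy $|n/N - a| \leq \|\tau\|_\infty$ or $|n/N - b| \leq \|\tau\|_\infty$, since $|\tau(n/N)| \leq \|\tau\|_\infty$ is the maximum possible displacement. This places $D$ inside the union of two buffer intervals of length $2\|\tau\|_\infty$ each, centered at $a$ and $b$. The final step would then be a direct count: the sampling points $\{n/N\}_{n\in I_N}$ are equispaced with spacing $1/N$, so at most of order $2N\|\tau\|_\infty$ of them can fall within an interval of length $2\|\tau\|_\infty$. Summing over the two boundaries gives $|D| \leq 4N\|\tau\|_\infty$ and thus the desired bound after taking square roots.

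The hard part is the counting step, which is precisely where the off-grid assumption on $a, b$ becomes essential. Without it, a boundary point $a$ could coincide exactly with some $n/N$, and then an arbitrarily small non-zero $\tau$ at that index would flip the indicator, contributing $1$ to the squared norm regardless of how small $\|\tau\|_\infty$ is. This is precisely the failure mode highlighted in Remark \ref{rem:def}; the off-grid condition is what allows the sampling-point count in each buffer to be cleanly dominated by $2N\|\tau\|_\infty$ uniformly in $a,b$, and it is this uniform count that drives the $\|\tau\|_\infty^{1/2}$ decay rate in the statement.
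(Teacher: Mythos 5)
Your proposal matches the paper's own proof essentially step for step: the paper likewise expands the squared norm, identifies the set $S$ of ``crossing'' indices on which the $\{0,1\}$-valued summand equals one, encloses $S$ in the union $\Sigma$ of two $\|\tau\|_\infty$-neighborhoods of $a$ and $b$, and bounds $\text{card}(\Sigma)\leq 2\cdot\frac{2\|\tau\|_\infty}{1/N}=4N\|\tau\|_\infty$ before taking square roots. Your account of where the off-grid assumption $a,b \notin \{0,\frac{1}{N},\dots,\frac{N-1}{N}\}$ enters also agrees with the paper, which makes the same point in Remark~\ref{rem:def} and in the remark following Lemma~\ref{lem:ind} (where dropping the assumption costs an additive constant $+2$ in the cardinality bound, destroying the $\|\tau\|_\infty^{1/2}$ decay).
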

\begin{proof}
In order to upper-bound 
\begin{align*}
		\|\mathds{1}^N_{[a,b]} &- F_\tau \mathds{1}^N_{[a,b]} \|_2^2=\sum_{n\in I_N} |\mathds{1}^N_{[a,b]}[n] - (F_\tau \mathds{1}^N_{[a,b]})[n]|^2\\&=\sum_{n\in I_N} |\mathds{1}_{[a,b]}(n/N) -  \mathds{1}_{[a,b]}(n/N-\tau(n/N))|^2,
\end{align*}
we first note that the summand $h(n):=|\mathds{1}_{[a,b]}(n/N) -  \mathds{1}_{[a,b]}(n/N-\tau(n/N))|^2$ satisfies $h(n)=1$, for $n\in S$, where 
\begin{align*}
S&:=\Big\{ n\in I_N \, \Big|\, \frac{n}{N}\in [a,b] \  \text{ and }\  \frac{n}{N}-\tau\Big(\frac{n}{N}\Big)\notin [a,b] \Big\}\\
& \ \, \cup \Big\{ n\in I_N \, \Big|\, \frac{n}{N}\notin [a,b] \  \text{ and }\  \frac{n}{N}-\tau\Big(\frac{n}{N}\Big)\in [a,b] \Big\},
\end{align*}
and $h(n)=0$, for $n\in I_N\backslash S$. Thanks to $a,b \notin \{0,\frac{1}{N},\dots,\frac{N-1}{N}\}$, we have $S\subseteq \Sigma$, where
\begin{align*}
\Sigma&:=\Big\{ n\in \mathbb{Z} \, \Big|\, \Big| \frac{n}{N} - a \Big|< \| \tau \|_\infty \Big\}\\
& \ \, \cup\Big\{ n\in \mathbb{Z} \, \Big|\, \Big| \frac{n}{N} - b \Big|< \| \tau \|_\infty \Big\}.
\end{align*}
The cardinality of the set $\Sigma$ can be upper-bounded by  $2\, \frac{2\| \tau \|_\infty}{1/N}$, which then yields
\begin{align}
&\|\mathds{1}^N_{[a,b]} - F_\tau \mathds{1}^N_{[a,b]} \|_2^2=\sum_{n\in I_N}|h(n)|^2\nonumber\\
&=\sum_{n\in S}1 \leq \sum_{n\in \Sigma} 1\leq 4 N \| \tau \|_\infty.\label{eq:cookok}
\end{align}
This completes the proof.

\begin{remark}
For general $a,b \in [0,1]$, i.e., when we drop the assumption $a,b \notin \{0,\frac{1}{N},\dots,\frac{N-1}{N}\}$, it follows that $S\subseteq \Sigma'$, where
\begin{align*}
\Sigma'&:=\Big\{ n\in \mathbb{Z} \, \Big|\, \Big| \frac{n}{N} - a \Big|\leq \| \tau \|_\infty \Big\}\\
& \ \, \cup\Big\{ n\in \mathbb{Z} \, \Big|\, \Big| \frac{n}{N} - b \Big|\leq \| \tau \|_\infty \Big\}.
\end{align*}
Noting that the cardinality of $\Sigma'$ can be upper-bounded by $2\, \big(\frac{2\| \tau \|_\infty}{1/N}+1\big)=4N\| \tau \|_\infty +2$, this then yields (similarly to \eqref{eq:cookok})
\begin{align*}
&\|\mathds{1}^N_{[a,b]} - F_\tau \mathds{1}^N_{[a,b]} \|_2^2 \leq \sum_{n\in \Sigma} 1\leq 4 N \| \tau \|_\infty +2,
\end{align*}
which shows that the deformation error---for general $a,b\in [0,1]$---does not decay with $\| \tau \|^\alpha_\infty$ for some $\alpha>0$ (see also the example in Remark \ref{rem:def}).
\end{remark}
\vspace{-0.6cm}
\end{proof}

\section{Appendix:  Theorem \ref{thm:deep}}\label{proof:nonexpan3}
We start by establishing i). For ease of notation, again, we let $f_q:=U[q]f$ and $h_q:=U[q]h$, for $f,h \in H_{N_1}$,  $q\in \Lambda_1^d$. We have
\begin{align}
||| \Phi^d_\Omega(f) &-\Phi^d_\Omega(h)|||^2=\sum_{q\in \Lambda_{1}^d} ||(f_q-h_q)\ast\chi_d ||^2_2\label{eq:foot}\\ 
&\leq \| \chi_d \|_1^2 \underbrace{\sum_{q\in \Lambda_{1}^d} ||(f_q-h_q) ||^2_2}_{=:a_d}\label{abc2},
\end{align}
where \eqref{abc2} follows by Young's inequality \cite{folland2015course}. 
\begin{remark}\label{rem_proof}
We emphasize that \eqref{eq:foot} can also be upper-bounded by $B_{d+1}\sum_{q\in \Lambda_{1}^d} ||(f_q-h_q) ||^2_2$, which follows from the fact that $\{ g_{\lambda_{d+1}}\}_{\lambda_{d+1}\in \Lambda_{d+1}}\cup \{ \chi_d \}$ are atoms of the convolutional set $\Psi_{d+1}$ with Bessel bound $B_{d+1}$. Hence, one can substitute $\| \chi_d\|_1$ in \eqref{lipo21} by $B_{d+1}$. 
\end{remark}

The key step is then to show that $a_d$ can be upper-bounded according to
\begin{equation}\label{eq:p1p}
\begin{split}
a_{k}\leq (B_kL_k^2R_k^2)a_{k-1},\hspace{0.5cm}  k=1, \dots,d,
\end{split}
\end{equation}
and to note that
\begin{equation*}\label{eq:p2}
\begin{split}
&a_d\leq  (B_dL_d^2R_d^2)a_{d-1} \leq \dots \leq  \Big(\prod_{k=1}^d B_kL_k^2R_k^2\Big)a_{0}\\
&= \Big(\prod_{k=1}^d B_kL_k^2R_k^2\Big) \sum_{q\in\Lambda_1^0}\| f_q-h_q\|^2_2\\
&=\Big(\prod_{k=1}^d B_kL_k^2R_k^2\Big)\| f-h\|^2_2,
\end{split}
\end{equation*}
which yields  \eqref{eq:thmb1}. We now establish \eqref{eq:p1p}. Every path 
\begin{equation*}\label{eq:ee1}
\tilde{q} \in \Lambda_{1}^{k}=\underbrace{\Lambda_{1}\times\dots\times\Lambda_{k-1}}_{=\Lambda_{1}^{k-1}}\times\Lambda_{k}
\end{equation*}
of length $k$ can be decomposed into a path $q \in \Lambda_1^{k-1}$ of length $k-1$ and an index $\lambda_{k} \in \Lambda_{k}$ according to $\tilde{q}=(q,\lambda_{k})$. Thanks to \eqref{aaaaa} we have  $U[\tilde{q}]=U[(q,\lambda_{k})]=U_{k}[\lambda_{k}]U[q]$, which yields 
\begin{align}
&\sum_{\tilde{q}\in\Lambda_1^{k}}\| f_{\tilde{q}}-h_{\tilde{q}}\|^2_2\nonumber=\sum_{q\in\Lambda_1^{k-1}}\sum_{\lambda_{k}\in \Lambda_{k}}\| U_{k}[\lambda_{k}]f_q\nonumber\\
&-U_{k}[\lambda_{k}]h_q\|^2_2\label{eq:6}.
\end{align}
We next note that the term inside the sums on the RHS in \eqref{eq:6} satisfies 
\begin{align}&\| U_{k}[\lambda_{k}]f_q-U_{k}[\lambda_{k}]h_q\|^2_2\nonumber\\
&=\| P_{k}\big(\rho_{k}(f_q\ast g_{\lambda_{k}})\big)-P_{k}\big(\rho_{k}(h_q\ast g_{\lambda_{k}})\big)\|^2_2\nonumber\\
&\leq L_k^2R_k^2  \| (f_q-h_q)\ast g_{\lambda_{k}}\|_2^2 \label{abc6},
\end{align}
where we used the Lipschitz continuity of $P_k$ and $\rho_k$ with Lipschitz constants $R_k>0$ and $L_k>0$, respectively. As $\{ g_{\lambda_{k}}\}_{\lambda_{k}\in\Lambda_{k}}\cup\{ \chi_{k-1}\}$ are the atoms of the convolutional set $\Psi_{k}$, and $f_q, h_q \in H_{N_{k}}$ by \eqref{aaaaa}, we have
\begin{align*}
&\sum_{\lambda_{k}\in\Lambda_{k}}\| (f_q-h_q)\ast g_{\lambda_{k}}\|_2^2\leq B_{k}\| f_q-h_q\|^2_2,
\end{align*}
which, when used in \eqref{abc6} together with \eqref{eq:6}, yields 
\begin{align*}
&\sum_{\tilde{q}\in\Lambda_1^{k}}\| f_{\tilde{q}}-h_{\tilde{q}}\|^2_2\leq B_kL_k^2R_k^2 \sum_{q\in \Lambda_1^{k-1}} \| f_q-h_q\|_2^2,\label{abc3}
\end{align*}
and hence establishes \eqref{eq:p1p}, thereby completing the proof of i). 

We now turn to ii). The proof of \eqref{eq:thmb2} follows---as in the proof of ii) in Theorem \ref{mainmain} in Appendix \ref{proof:defo}---from \eqref{eq:thmb1} together with  $\Phi_\Omega^d(h)=\{(U[q]h)\ast \chi_d \}_{q\in \Lambda_1^d}=0$ for $h=0$, see \eqref{eq:iiii5}.

We continue with iii). The proof of the deformation sensitivity bound \eqref{abc1a} is based on two key ingredients. The first one is the Lipschitz continuity result in \eqref{eq:thmb1}. The second ingredient is, again, the deformation sensitivity bound \eqref{eq:main} stated in Proposition \ref{prop:main} in Appendix \ref{proof:uppersuper}. Combining \eqref{eq:thmb1} and \eqref{eq:main}---as in the proof of iii) in Theorem \ref{mainmain} in Appendix \ref{proof:defo}---then establishes \eqref{abc1a} and completes the proof of iii). 

We proceed to iv). For ease of notation, again, we let $f_q:=U[q]f$, for $f \in H_{N_1}$, $q\in \Lambda_1^d$. Thanks to \eqref{aaaaa}, we have $f_q \in H_{N_{d+1}}$, for $q\in \Lambda_1^d$. The key step in establishing \eqref{eq:deepo} is to show that the operator $U_k$, $k\in \{1,2,\dots, d\}$, defined in \eqref{eq:e1} satisfies the relation
\begin{equation}\label{eq:dd1}
(U_k[\lambda_k]T_mf)=T_{m/{S_k}}(U_{k}[\lambda_k]f), 
\end{equation}
for $f\in H_{N_k}$, $m\in \mathbb{Z}$ with $\frac{m}{S_k}\in \mathbb{Z}$, and $\lambda_k \in \Lambda_k$.
With the definition of $U[q]$ in \eqref{aaaaa} this then yields 
\begin{equation}\label{eq:i101}
\begin{split}
(U[q]T_mf)=T_{m/(S_1\cdots S_{d})}(U[q]f), 
\end{split}
\end{equation}
for  $f\in H_{N_1}$, $m\in \mathbb{Z}$ with $\frac{m}{S_1\dots S_d}\in \mathbb{Z}$, and $q \in \Lambda_1^d$.
The identity \eqref{eq:deepo} is then a direct consequence of \eqref{eq:i101} and the translation-covariance of the circular convolution operator (which holds thanks to $\frac{m}{S_1\dots S_d}\in \mathbb{Z}$): 
\begin{equation*}\label{eq:i5}
\begin{split}
\Phi_\Omega^d(T_mf)&=\big\{ \big(U[q]T_mf \big) \ast \chi_d \big\}_{q\in \Lambda_1^d}\\
&=\big\{ \big(T_{m/(S_1\cdots S_d)}U[q]f\big) \ast \chi_d \big\}_{q\in \Lambda_1^d}\\
&=\big\{ T_{m/(S_1\cdots S_d)}\big((U[q]f) \ast \chi_d\big) \big\}_{q\in \Lambda_1^d}\\
&=T_{m/(S_1\cdots S_d)}\Phi_\Omega^d(f),
\end{split}
\end{equation*}
for $f\in H_{N_1}$ and $m\in \mathbb{Z}$ with $\frac{m}{S_1\dots S_d}\in \mathbb{Z}$. 
It remains to establish \eqref{eq:dd1}:
\begin{align}
(U_k[\lambda_k]T_mf)&=\Big(P_k\big(\rho_k((T_mf)\ast g_{\lambda_k})\big)\Big)\nonumber\\
&=\Big(P_k\big(\rho_k(T_m(f\ast g_{\lambda_k}))\big)\Big)\label{eq:d1}\\
&=\Big(P_k\big(T_m(\rho_k(f\ast g_{\lambda_k}))\big)\Big)\label{eq:d2},
\end{align} where in \eqref{eq:d1} we used  the translation covariance of the circular convolution operator (which holds thanks to $m \in \mathbb{Z}$), and in \eqref{eq:d2} we used the fact that point-wise non-linearities commute with the translation operator  thanks to
\begin{align*}
(\rho_kT_mf)[n]&=\rho_k((T_mf)[n])\\
&=\rho_k(f[n-m])=(T_m\rho_kf)[n],
\end{align*}
for $f\in H_{N_k}$, $n\in I_{N_k}$, and $m\in \mathbb{Z}$. Next, we note that the pooling operators $P_k$ in Section \ref{sec:exmppooling} (namely, sub-sampling, average pooling, and max-pooling) can all be written as $(P_kf)[n]=(P'_kf)[S_kn]$, for some $P'_k$ that commutes with the translation operator, namely, for (i) sub-sampling   $(P'_kf)[n]=f[n]$, with $(P'_kT_mf)[n]=(T_mf)[n]=f[n-m]=(T_mP'_kf)[n]$, (ii) average pooling $(P'_kf)[n]= \sum_{l=n}^{n+S_k-1}\alpha_{l-n} f[l]$ with 
\begin{align*}
(P'_kT_mf)[n]&=\sum_{l=n}^{n+S_k-1}\alpha_{l-n}f[l-m]\\
&=\sum_{l'=(n-m)}^{(n-m)+S_k-1}\alpha_{l-(n-m)}f[l']\\
&=(T_mP'_kf)[n],
\end{align*}
and for (iii) max-pooling  $(P'_kf)[n]=\max_{l\in \{n,\dots,n+S_k-1 \}}|f[l]|$ with 
 \begin{align*}
(P'_kT_mf)[n]&=\max_{l\in \{n,\dots,n+S_k-1 \}}|f[l-m]|\\
&=\max_{(l-m)\in \{n-m,\dots,(n-m)+S_k-1 \}}|f[l-m]|\\
&=\max_{l'\in \{(n-m),\dots,(n-m)+S_k-1 \}}|f[l']|\\
&=(T_mP'_kf)[n],
\end{align*}
in all three cases for $f\in H_{N_k}$, $n\in I_{N_k}$, and $m\in \mathbb{Z}$. This then yields
\begin{align}
(P_kT_mf)[n]&=(P'_kT_mf)[S_kn]=(T_mP'_kf)[S_kn]\nonumber\\
&=P'_k(f)[S_kn-m]\nonumber\\
&=P'_k(f)[S_k(n-S_k^{-1}m)]\nonumber\\
&=P_k(f)[n-S_k^{-1}m]\nonumber\\
&=(T_{m/S_k}P_kf)[n]\label{eq:ddd1},
\end{align}
for $f\in H_{N_k}$ and $n\in I_{N_{k+1}}$. Here, we used $m/S_k \in \mathbb{Z}$, which is by assumption. Substituting \eqref{eq:ddd1} into \eqref{eq:d2} finally yields  
$$
(U_k[\lambda_k]T_mf)=T_{m/S_k}U_k[\lambda_k]f,
$$
for $f\in H_{N_k}$, $m\in \mathbb{Z}$ with $\frac{m}{S_k}\in \mathbb{Z}$, and $\lambda_k \in \Lambda_k$. This completes the proof of \eqref{eq:dd1} and hence establishes \eqref{eq:deepo}.

\end{document}